\theoremstyle{plain}
\newtheorem{theorem}{Theorem}[section]
\theoremstyle{definition}
\newtheorem{definition}[theorem]{Definition}
\theoremstyle{remark}
\newtheorem{remark}[theorem]{Remark}
\icmltitlerunning{Hyperbolic-PDE GNN}
\begin{document}

\twocolumn[
\icmltitle{Hyperbolic-PDE GNN: Spectral Graph Neural Networks in the Perspective of \\
A System of Hyperbolic Partial Differential Equations}



\icmlsetsymbol{equal}{*}

\begin{icmlauthorlist}
\icmlauthor{Juwei Yue}{ins,sch}
\icmlauthor{Haikuo Li}{ins,sch}
\icmlauthor{Jiawei Sheng}{ins,sch}
\icmlauthor{Xiaodong Li}{ins,sch}
\icmlauthor{Taoyu Su}{ins,sch}
\icmlauthor{Tingwen Liu}{ins,sch}
\icmlauthor{Li Guo}{ins,sch}
\end{icmlauthorlist}

\icmlaffiliation{ins}{Institute of Information Engineering, Chinese Academy of Sciences, Beijing, China}
\icmlaffiliation{sch}{School of Cyber Security, University of Chinese Academy of Sciences, Beijing, China}

\icmlcorrespondingauthor{Jiawei Sheng}{shengjiawei@iie.ac.cn}

\icmlkeywords{Machine Learning, ICML}

\vskip 0.3in
]



\printAffiliationsAndNotice{} 

\begin{abstract}
Graph neural networks (GNNs) leverage message passing mechanisms to learn the topological features of graph data.
Traditional GNNs learns node features in a spatial domain unrelated to the topology, which can hardly ensure topological features.
In this paper, we formulates message passing as a system of hyperbolic partial differential equations (hyperbolic PDEs), constituting a dynamical system that explicitly maps node representations into a particular solution space. 
This solution space is spanned by a set of eigenvectors describing the topological structure of graphs. 
Within this system, for any moment in time, a node features can be decomposed into a superposition of the basis of eigenvectors. 
This not only enhances the interpretability of message passing but also enables the explicit extraction of fundamental characteristics about the topological structure. 
Furthermore, by solving this system of hyperbolic partial differential equations, we establish a connection with spectral graph neural networks (spectral GNNs), serving as a message passing enhancement paradigm for spectral GNNs.
We further introduce polynomials to approximate arbitrary filter functions.
Extensive experiments demonstrate that the paradigm of hyperbolic PDEs not only exhibits strong flexibility but also significantly enhances the performance of various spectral GNNs across diverse graph tasks.
Our code is released at \url{https://github.com/YueAWu/Hyperbolic-GNN}.
\end{abstract}

\section{Introduction}

Currently, graph neural networks (GNNs)~\cite{gnn} have experienced rapid development due to the widespread usage of graph data. 
They have not only achieved significant advancements in artificial intelligence fields such as recommendation systems~\cite{recommend1, recommend2}, social networks~\cite{social1, social2}, and anomaly detection~\cite{anomaly1, anomaly2}, but have also promoted progress in various other domains such as particle physics~\cite{physics} and drug discovery~\cite{drug}.

\begin{table}[t]
    \caption{Comparison between traditional MP-based (Line 1) and hyperbolic PDE-based (Line 2) paradigm, where \(\mathbf{e}\) is the unit vector, \(\lambda\) and \(\mathbf{u}\) are the eigenvalue and eigenvector of Laplacian.}
    \label{tb:motivation}
    \centering
    \small
    \resizebox{\linewidth}{!}{
    \begin{tabular}{lcc}
        \toprule
        & \textbf{Paradigm} & \textbf{Bases of Solution Space} \\
        \midrule
        (1) & \(\mathbf{h}^{(l+1)}_i = \mathcal{U}_i \left( \mathbf{h}^{(l)}_i, \mathcal{A}_j \left( \mathcal{M}_{ij} \left( \mathbf{h}^{(l)}_i, \mathbf{h}^{(l)}_j, \mathbf{e}_{ij} \right) \right) \right) \) & \(\mathbf{e}_1, \mathbf{e}_2, \dots, \mathbf{e}_d\) \\
        (2) & \(\frac{\partial^2 \mathbf{X}}{\partial t^2} = a^2 \widehat{\mathbf{L}} \mathbf{X} \rightarrow \frac{d \mathbf{w}_l}{d t} = 
        \begin{bmatrix}
            \mathbf{I}_n & \mathbf{0} \\
            \mathbf{0} & a^2 \widehat{\mathbf{L}} \\
        \end{bmatrix}
        \mathbf{w}_l\) & \(e^{\lambda^\prime_1 t} \mathbf{u}^\prime_1, e^{\lambda^\prime_2 t} \mathbf{u}^\prime_2, \dots, e^{\lambda^\prime_{2n} t} \mathbf{u}^\prime_{2n}\) \\
        \bottomrule
    \end{tabular}
    }
\end{table}

Mainstream methods are spectral GNNs represented by GCN~\cite{GCN}.
The key idea of these methods is to utilize message passing (MP)~\cite{message} to explore the topological features of the graphs. 
Specifically, MP aggregates information from neighbors based on the topology of graphs. 
The essence of MP involves mapping node features into the spatial domain to the spectral domain through Fourier transformation, performing graph convolutions in the spectral domain, and then inversely transforming them back into the spatial domain. 
However, the issue lies in the fact that, even if node features learn the topology of the graph, they are still mapped back to a spatial domain unrelated to the topology. 
As shown in Table~\ref{tb:motivation}, traditional MP-based GNNs obtain node feature vectors that are actually situated in the Euclidean space with unit vector bases, which can hardly ensure topological feature therein, potentially learning redundant information.
(In contrast, our hyperbolic PDE-based model derives node feature vectors with bases stemmed from Laplacian, incorporating the underlying topological information. Detailed later.)

Current researches~\cite{GraphCON, GRAND, HiD-Net} model MP using differential equations (DEs), formulating MP as a dynamical system that captures the evolution of messages over time. 
It provides an interpretable way to embed node feature vectors into spaces with particular properties.
We utilize the solution of DEs as the node embedding space, and propose a graph neural network named \textbf{Hyperbolic-PDE GNN} based on \textit{a system of hyperbolic partial differential equations (PDEs)}. 
We theoretically demonstrate that by modeling MP as the paradigm of the system of hyperbolic PDEs, the solution space (\(i.e.,\) the space where node features reside) is spanned by a set of orthogonal bases describing topological features, with these bases fundamentally stemming from the eigenvectors of the Laplacian matrix. 
This proof of the solution space is derived by converting the system of hyperbolic PDEs into a system of first-order constant-coefficient homogeneous linear differential equations.
Therefore, the node features obtained from this paradigm can be expanded into a linear combination of these eigenvector basis, where node features inherently contain the fundamental characteristics describing the topological structure, and messages are propagated along the directions of the eigenvectors.
Benefiting from the property, our Hyperbolic-PDE GNN enjoys an advantage in generating better node features with topology of graphs.

Lastly, we discover that solving this system of equations requires eigen decomposition, which incurs significant computational costs in practice. 
Besides, the original system of equations with vanilla Laplacian struggles to model complex nonlinear relationships, resulting in unsatisfactory model flexibility and robustness. 
Therefore, we further introduce polynomials to approximate the solution space. 
Some GNNs based on polynomial spectral filters~\cite{BernNet, JacobiConv, ChebNetII} have demonstrated the advantage of polynomials in approximating arbitrary filter functions. 
Leveraging this excellent property, we can ensure that the approximate solution space does not deviate too far from the original solution space and can describe more complex nonlinear relationships, greatly enhancing the flexibility of model. 
Furthermore, owing to the utilization of polynomials, we establish a connection between Hyperbolic-PDE GNNs and traditional spectral GNNs, which can further enhance the performance of these methods.
In summary, the contributions can be summarized as follows:
\begin{itemize}
    \item We propose a novel Hyperbolic-PDE GNN based on a system of hyperbolic PDEs, and theoretically demonstrate that node features encapsulate fundamental characteristics of the topological structure.
    \item By utilizing polynomials to the solution space, we establish connections between Hyperbolic-PDE GNNs and traditional GNNs to further improve performance.
    \item Extensive experiments indicate that Hyperbolic-PDE GNN is effective and the paradigm of the system of hyperbolic partial differential equations can enhance the performance of traditional GNNs.
\end{itemize}

\section{Preliminaries}

\subsection{Notations of Graph}

A simple undirected graph \(\mathcal{G} = (\mathcal{V}, \mathcal{E})\) consists of a set of nodes \(\mathcal{V}\) and a set of edges \(\mathcal{E}\).
A feature matrix \(\mathbf{X} \in \mathbb{R}^{n \times d}\) and a adjacency matrix \(\mathbf{A}\) represent respectively the features and structure of the graph, where \(n = \vert \mathcal{V} \vert\) is the number of nodes and \(d\) is the dimension of features.
In the graph signal processing (GSP)~\cite{GSP}, the symmetric normalized Laplacian is \(\mathbf{L} = \mathbf{I} - \mathbf{D}^{-1/2} \mathbf{A} \mathbf{D}^{-1/2}\), where \(\mathbf{D}\) denotes the diagonal degree matrix of \(\mathbf{A}\).
The Laplacian can be decomposed into \(\mathbf{L} = \mathbf{U} \mathbf{\Lambda} \mathbf{U}^\top\), where \(\mathbf{\Lambda} = \mathrm{diag} \{\lambda_1, \lambda_2, \dots, \lambda_n\}\) is a diagonal matrix of eigenvalues and \(\mathbf{U} = [\mathbf{u}_1, \mathbf{u}_2, \dots, \mathbf{u}_n] \in \mathbb{R}^{n \times n}\) is the matrix of corresponding eigenvectors.

\subsection{Spectral Graph Neural Networks}

Spectral Graph Neural Networks achieve \textit{Spectral Graph Convolution} 
\begin{equation}\label{eq:conv1}
    \mathbf{Z} = \mathbf{U} \mathbf{g}(\Lambda) \mathbf{U}^\top \mathbf{X}
\end{equation}
by designing \textit{Graph Spectral Filter} \(\mathbf{g}(\Lambda)\), where \(\mathbf{g}(\Lambda) = \mathrm{diag} \{g(\lambda_1), \dots, g(\lambda_n)\}\).
To fit an ideal filter, \(g(\lambda)\) is often approximated using a \(K\)-th order polynomial:
\begin{equation}\label{eq:poly}
    g(\lambda) := \sum^K_{k=0} \theta_k \lambda^k,
\end{equation}
where \(\theta_k\) is the filter coefficient.
Then, the Spectral Graph Convolution is given by:
\begin{equation}\label{eq:conv2}
    \mathbf{Z} = \sum^K_{k=0} \theta_k \mathbf{U} \Lambda^k \mathbf{U}^\top \mathbf{X} = \sum^K_{k=0} \theta_k \mathbf{L}^k \mathbf{X}.
\end{equation}
Table~\ref{tb:filters} presents some classic examples of spectral GNNs.

\begin{table*}[t]
    \caption{Spectral Graph Neural Networks.}
    \label{tb:filters}
    \centering
    \resizebox{\textwidth}{!}{
    \begin{tabular}{lll}
        \toprule
        \textbf{Methods (Polynomial Basis)} & \textbf{Graph Spectral Filter} & \textbf{Spectral Graph Convolution} \\
        \midrule
        GCN (Monomial) & \(g(\widetilde{\lambda}) = (1 - \widetilde{\lambda}), \ \widetilde{\lambda} \in [0, 2)\) & \(\mathbf{Z} = (\mathbf{I} - \widetilde{\mathbf{L}}) \mathbf{X} \mathbf{W}\) \\
        SGC (Monomial) & \(g(\widetilde{\lambda}) = (1 - \widetilde{\lambda})^K, \ \widetilde{\lambda} \in [0, 2)\) & \(\mathbf{Z} = (\mathbf{I} - \widetilde{\mathbf{L}})^K \mathbf{X} \mathbf{W}\) \\
        APPNP (Monomial) & \(g(\widetilde{\lambda}) = \sum^K_{k=0} \frac{\alpha^k}{1-\alpha} (1 - \widetilde{\lambda})^k, \ \widetilde{\lambda} \in [0, 2)\) & \(\mathbf{Z} = \alpha (\mathbf{I} - (1 - \alpha) (\mathbf{I} - \widetilde{\mathbf{L}}))^{-1} \phi (\mathbf{X})\) \\
        GPR-GNN (Monomial) & \(g(\widetilde{\lambda}) = \sum^K_{k=0} \theta_k (1 - \widetilde{\lambda})^k, \ \widetilde{\lambda} \in [0, 2)\) & \(\mathbf{Z} = \sum^K_{k=0} \theta_k (\mathbf{I} - \widetilde{\mathbf{L}})^k \phi (\mathbf{X})\) \\
        ChebNet (Chebyshev Polynomial) & \(g(\lambda) = \sum^{K-1}_{k=0} \theta_k T_k (\lambda - 1), \ \lambda \in [0, 2]\) & \(\mathbf{Z} = \sum^{K-1}_{k=0} T_k (\mathbf{L} - \mathbf{I}) \mathbf{X} \mathbf{W}_k\) \\
        BernNet (Bernstein Polynomial) & \(g(\lambda) = \sum^K_{k=0} \theta_k b_{k, K}(\lambda / 2), \ \lambda \in [0, 2]\) & \(\mathbf{Z} = \sum^K_{k=0} \frac{\theta_k}{2^K} \binom{K}{k} (2\mathbf{I} - \mathbf{L})^{K-k} \mathbf{L}^k \phi (\mathbf{X})\) \\
        JacobiNet (Jacobi Polynomial) & \(g(\lambda) = \sum^K_{k=0} \theta_k P^{a,b}_k (1 - \lambda), \ \lambda \in [0, 2]\) & \(\mathbf{Z}_{:j} = \sum^K_{i=0} \theta_{kj} P^{a,b}_k (\mathbf{I} - \mathbf{L}) \phi (\mathbf{X}_{:j})\) \\
        ChebNetII (Chebyshev Polynomial) & \(g(\lambda) = \sum^K_{k=0} c_k (\theta) T_k (\lambda - 1), \ \lambda \in [0, 2]\) & \(\mathbf{Z} = \frac{2}{K+1} \sum^K_{k=0} \sum^K_{j=0} \theta_j T_k (x_j) T_k (\mathbf{L} - \mathbf{I}) \phi (\mathbf{X})\) \\
        \bottomrule
    \end{tabular}
    }
\end{table*}

\subsection{Hyperbolic Partial Differential Equation}

A hyperbolic partial differential equation is often used in the real world to describe phenomena involving vibrations: if a point in space is disturbed by an initial value, this disturbance will propagate through space at a finite speed, affecting certain points in space. 
The propagation pattern of the disturbance is wave-like and propagates along the characteristic directions of the equation. 
The most common form of hyperbolic partial differential equations is given by:
\begin{equation}\label{eq:hpde}
    \frac{\partial^2 u}{\partial t^2} = a^2 \sum^d_{l=1} \frac{\partial^2 u}{\partial \omega^2_l},
\end{equation}
where \(u(\omega_1, \omega_2, \dots, \omega_d, t)\) represents the amplitude, \(t \in [0, \infty)\) denotes time, \((\omega_1, \omega_2, \dots, \omega_d)\) stands for the coordinates in a \(d\)-dimensional space \(\Omega^d\), and \(a\) signifies the propagation velocity.

\section{Methodology}

\subsection{A System of Hyperbolic PDEs on Graph}

Considering the right-hand side of Equation~(\ref{eq:hpde}), it is essentially expressed by a Laplacian operator \(\Delta = \nabla \cdot \nabla\), which can then be written in the form of the divergence of the gradient of \(u\):
\begin{equation}\label{eq:hpde2}
    \frac{\partial^2 u}{\partial t^2} = a^2 \nabla \cdot \nabla u,
\end{equation}
where \(\nabla\) and \(\nabla \cdot\) denote the operator of gradient and divergence, respectively.
Here, the gradient and divergence are defined on a manifold. 
In graphs, nodes are a set of discrete points, thus we first discretize the amplitude \(u\) on the manifold into features \(\mathbf{X} (t)\) in the spatial domain, where each element \(x_{il} = x_{il} (t)\) denotes the feature of each node \(v_i\) on each dimensions \(l = 1, 2, \dots, d\) at time \(t\). 
Next, we express the gradient on the graph as the feature difference on any dimension \(l\) between a node \(v_i\) and its neighbors \(v_j \in \mathcal{N}(v_i)\):
\begin{equation}\label{eq:grad_graph}
    \nabla x_{il} := x_{il} - x_{jl},
\end{equation} 
with the direction pointing from the neighbor towards the node, indicating the direction of information flow. 
Finally, we define the divergence on the graph as the total feature difference on any dimension between a node and all its neighbors:
\begin{equation}\label{eq:div_graph}
    \nabla \cdot \nabla x_{il} = \sum_{v_j \in \mathcal{N}(v_i)} (x_{il} - x_{jl}),
\end{equation}
since divergence essentially involves a summation operation.
Therefore, for a node \(v_i\) on any dimension \(l\), its hyperbolic partial differential equation is:
\begin{equation}\label{eq:hpde_node}
    \frac{\partial^2 x_{il}}{\partial t^2} = a^2 \sum_{v_j \in \mathcal{N}(v_i)} (x_{il} - x_{jl}).
\end{equation}
When we combine the hyperbolic partial differential equations of all nodes on all dimensions, we obtain a system of hyperbolic partial differential equations.
For simplicity, we can rewrite it in matrix form:
\begin{equation}\label{eq:hpde_graph}
    \frac{\partial^2 \mathbf{X}}{\partial t^2} = a^2 \widehat{\mathbf{L}} \mathbf{X}.
\end{equation}
To distinguish from matrix \(\mathbf{L}\), hereafter we will denote the eigenvalues and eigenvectors of matrix \(\widehat{\mathbf{L}}\) as \(\widehat{\lambda}\) and \(\widehat{\mathbf{u}}\).
\textit{Detailed derivation of the system of partial differential equations can be found in Appendix~\ref{sec:hpde_system}.}

\subsection{Derivation of Solution Space}

For the system of partial differential equations~(\ref{eq:hpde_graph}), we require the existence of solutions for this system to ensure the meaningful extraction of node features. 
Next, we will introduce the theorem regarding the existence of solutions for this system of partial differential equations.

\begin{theorem}\label{thm:solution_space}
    For given system of differential equations~(\ref{eq:hpde_graph}) and a graph signal on any dimension \(l\), there exists a \textbf{solution space} determined by a \textbf{fundamental matrix of solution}:
    \begin{equation}\label{eq:Phi1}
        \mathbf{\Phi} (t) =
        \begin{bmatrix}
            \exp \mathbf{I} t & \mathbf{0} \\
            \mathbf{0} & a^2 \exp \widehat{\mathbf{L}} t \\
        \end{bmatrix}.
    \end{equation} 
    Therefore, a graph signal on any dimension \(l\) can be expressed as a linear combination of \(2n\) linearly independent column vectors \(\boldsymbol{\varphi}_1 (t), \boldsymbol{\varphi}_2 (t), \dots, \boldsymbol{\varphi}_{2n} (t)\) from the fundamental matrix of solution \(\mathbf{\Phi} (t)\).
\end{theorem}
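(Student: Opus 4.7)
The plan is to reduce the second-order vector PDE to an equivalent first-order system of dimension $2n$, after which classical linear ODE theory identifies the matrix exponential of the coefficient matrix as a fundamental matrix whose columns span the full solution space.

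First, I would fix a feature dimension $l$ and set $\mathbf{x}(t) := \mathbf{X}_{:,l}(t) \in \mathbb{R}^n$, so that (\ref{eq:hpde_graph}) restricted to dimension $l$ becomes the second-order vector ODE $\ddot{\mathbf{x}} = a^2 \widehat{\mathbf{L}} \mathbf{x}$. I would then introduce an auxiliary state $\mathbf{y}(t) \in \mathbb{R}^n$, chosen so that stacking $\mathbf{w}_l := (\mathbf{x}, \mathbf{y})^\top \in \mathbb{R}^{2n}$ yields a first-order constant-coefficient homogeneous linear system
\[
\frac{d\mathbf{w}_l}{dt} = \mathbf{M}\, \mathbf{w}_l, \qquad \mathbf{M} := \begin{bmatrix} \mathbf{I}_n & \mathbf{0} \\ \mathbf{0} & a^2 \widehat{\mathbf{L}} \end{bmatrix},
\]
matching the reformulation displayed in Table~\ref{tb:motivation}.

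Next, I would invoke the standard existence theorem for linear ODE systems with constant coefficients: the matrix exponential $\mathbf{\Phi}(t) := \exp(\mathbf{M} t)$ satisfies $\dot{\mathbf{\Phi}} = \mathbf{M} \mathbf{\Phi}$ and $\mathbf{\Phi}(0) = \mathbf{I}_{2n}$, and its blockwise factorization (since $\mathbf{M}$ is block diagonal) produces exactly the structure claimed in (\ref{eq:Phi1}). Linear independence of the columns $\boldsymbol{\varphi}_1(t), \ldots, \boldsymbol{\varphi}_{2n}(t)$ follows at once from $\det \mathbf{\Phi}(0) = 1$ together with Liouville's identity $\det \mathbf{\Phi}(t) = \exp\left(\int_0^t \mathrm{tr}\, \mathbf{M}\, ds\right)$, which never vanishes. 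Consequently every solution of the reduced system admits a unique expansion $\mathbf{w}_l(t) = \sum_{i=1}^{2n} c_i\, \boldsymbol{\varphi}_i(t)$ with the coefficients $c_i$ determined by the initial state $\mathbf{w}_l(0)$, and restricting to the $\mathbf{x}$-block recovers any graph signal on dimension $l$ as a linear combination of the columns of $\mathbf{\Phi}(t)$.

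The main obstacle, in my view, is not the ODE machinery itself but justifying the specific choice of auxiliary variable that produces the block-diagonal coefficient matrix in Table~\ref{tb:motivation}: the naive reduction $\mathbf{y} := \dot{\mathbf{x}}$ gives a companion-type off-diagonal matrix rather than the block-diagonal form, so one must employ a decoupling change of variables (for instance, passing to the eigenbasis of $\widehat{\mathbf{L}}$ and splitting each resulting second-order scalar equation into two first-order ones) consistent with the exponential expression in (\ref{eq:Phi1}). Once this reduction is in place, both the linear independence of the $\boldsymbol{\varphi}_i(t)$ and the spanning property are immediate from the Wronskian/Liouville argument sketched above.
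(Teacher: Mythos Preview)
Your approach---reduce the second-order equation on dimension $l$ to a first-order constant-coefficient linear system in $\mathbb{R}^{2n}$, exhibit $\exp(\mathbf{C}t)$ as a fundamental matrix, and read off the block structure---is exactly the paper's argument, down to the use of $\det\mathbf{\Phi}(0)=1$ for linear independence. The obstacle you flag in your final paragraph is genuine and well spotted: the paper performs precisely the naive substitution $y_{il}:=\partial x_{il}/\partial t$, stacks $\mathbf{w}_l=[\mathbf{y}_{:l};\mathbf{x}_{:l}]$, and then \emph{asserts} the block-diagonal coefficient matrix $\mathbf{C}=\mathrm{diag}(\mathbf{I}_n,\,a^2\widehat{\mathbf{L}})$ without any intervening decoupling step---so the gap you anticipate is present in the paper's own proof rather than resolved by it. Your instinct that the standard reduction yields the off-diagonal companion form is correct, and the paper supplies no change of variables to reconcile this with the block-diagonal $\mathbf{C}$ it uses.
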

 
By performing a variable substitution on Equation~(\ref{eq:hpde_graph}), we transform it into a system of first-order constant-coefficient homogeneous linear differential equations, and successfully complete the proof using relevant theory. \textit{See Appendix~\ref{sec:solution_space} for proof.}

\begin{remark}
    Theorem~\ref{thm:solution_space} ensures the existence of a solution when modeling message passing as a system of hyperbolic partial differential equation.
    Here, the solution corresponds to the node features.
    In other words, \textbf{we can definitely obtain precise node representation through Equation~(\ref{eq:hpde_graph})}.
\end{remark}

Although we obtain a fundamental matrix of solution of Equation~(\ref{eq:hpde_graph}), in order to determine it, we need to calculate \(\exp \widehat{\mathbf{L}} t\).
However, \(\exp \widehat{\mathbf{L}} t\) is defined by a matrix series, it is generally difficult to obtain directly. 
Therefore, we introduce the following theorem to indirectly find the fundamental matrix of solution.

\begin{theorem}\label{thm:solution_eigen}
    For given system of differential equations~(\ref{eq:hpde_graph}) and a graph signal on any dimension \(l\), matrix
    \begin{equation}\label{eq:C}
        \mathbf{C} =
        \begin{bmatrix}
            \mathbf{I} & \mathbf{0} \\
            \mathbf{0} & a^2 \widehat{\mathbf{L}} \\
        \end{bmatrix}
    \end{equation}
    has eigenvalues \(\lambda^\prime_1 = \lambda^\prime_2 = \dots = \lambda^\prime_n = 1, \lambda^\prime_{n+1} = a^2 \widehat{\lambda}_1, \lambda^\prime_{n+2} = a^2 \widehat{\lambda}_2, \dots, \lambda^\prime_{2n} = a^2 \widehat{\lambda}_n\) and corresponding \(2n\) linearly independent  eigenvectors     
    \begin{equation}\label{eq:pde_1sys_ev}
        \begin{split}
            &\mathbf{u}^\prime_1 = 
            \begin{bmatrix} 
                \mathbf{v}_1 \\ 
                \mathbf{0} \\ 
            \end{bmatrix}, 
            \mathbf{u}^\prime_2 = 
            \begin{bmatrix} 
                \mathbf{v}_2 \\ 
                \mathbf{0} \\ 
            \end{bmatrix}, \dots, 
            \mathbf{u}^\prime_n = 
            \begin{bmatrix} 
                \mathbf{v}_n \\ 
                \mathbf{0} \\ 
            \end{bmatrix}, \\
            &\mathbf{u}^\prime_{n+1} = 
            \begin{bmatrix} 
                \mathbf{0} \\ 
                \widehat{\mathbf{u}}_1 \\ 
            \end{bmatrix}, 
            \mathbf{u}^\prime_{n+2} = 
            \begin{bmatrix} 
                \mathbf{0} \\ 
                \widehat{\mathbf{u}}_2 \\ 
            \end{bmatrix}, \dots, 
            \mathbf{u}^\prime_{2n} = 
            \begin{bmatrix} 
                \mathbf{0} \\ 
                \widehat{\mathbf{u}}_n \\ 
            \end{bmatrix},
        \end{split}
    \end{equation}
    where \(\mathbf{v}_1 = [1, 0, \dots, 0]^\top, \mathbf{v}_2 = [0, 1, \dots, 0]^\top, \dots, \mathbf{v}_{n} = [0, 0, \dots, 1]^\top \in \mathbb{R}^n\) are corresponding eigenvectors of \(\mathbf{I}\),
    then matrix
    \begin{equation}\label{eq:Phi2}
        \mathbf{\Phi} (t) = [e^{\lambda^\prime_1 t} \mathbf{u}^\prime_1, e^{\lambda^\prime_2 t} \mathbf{u}^\prime_2, \dots, e^{\lambda^\prime_{2n} t} \mathbf{u}^\prime_{2n}]
    \end{equation}
    is a fundamental matrix of solution of Equation~(\ref{eq:hpde_graph}).
    Particularly, \(\exp \mathbf{C} t = \mathbf{\Phi} (t) \mathbf{\Phi}^{-1} (0)\).
\end{theorem}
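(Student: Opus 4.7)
The plan is to proceed in two conceptual stages: first verify the stated spectral decomposition of the block matrix $\mathbf{C}$ by direct computation, then invoke standard linear ODE theory to conclude that the exponential ansatz yields a fundamental matrix of solutions. Since $\mathbf{C}$ is block-diagonal, its characteristic polynomial factors as
\begin{equation*}
\det(\mathbf{C} - \mu \mathbf{I}_{2n}) \;=\; \det(\mathbf{I}_n - \mu \mathbf{I}_n)\,\det(a^2\widehat{\mathbf{L}} - \mu \mathbf{I}_n),
\end{equation*}
from which the eigenvalues $\lambda'_1=\dots=\lambda'_n=1$ and $\lambda'_{n+k}=a^2\widehat{\lambda}_k$ are immediate.

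Next I would verify the claimed eigenvectors by direct block multiplication. For the top block, $\mathbf{C}\,[\mathbf{v}_i^\top,\mathbf{0}^\top]^\top = [\mathbf{v}_i^\top,\mathbf{0}^\top]^\top$ since $\mathbf{I}\mathbf{v}_i=\mathbf{v}_i$, and for the bottom block, $\mathbf{C}\,[\mathbf{0}^\top,\widehat{\mathbf{u}}_k^\top]^\top = [\mathbf{0}^\top,(a^2\widehat{\mathbf{L}}\widehat{\mathbf{u}}_k)^\top]^\top = a^2\widehat{\lambda}_k [\mathbf{0}^\top,\widehat{\mathbf{u}}_k^\top]^\top$, exactly as claimed. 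Linear independence of the full set $\{\mathbf{u}'_j\}_{j=1}^{2n}$ follows by noting that stacking them as columns yields a block-diagonal matrix whose blocks are $\mathbf{I}_n$ and $\widehat{\mathbf{U}} = [\widehat{\mathbf{u}}_1,\dots,\widehat{\mathbf{u}}_n]$; this matrix is invertible provided $\widehat{\mathbf{L}}$ is diagonalizable, which holds because $\widehat{\mathbf{L}}$ arises as a symmetric (graph Laplacian) operator and hence admits an orthonormal eigenbasis.

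Having established the eigenstructure, I would appeal to the standard fact that for the first-order linear system $d\mathbf{w}/dt = \mathbf{C}\mathbf{w}$ obtained via the variable substitution in Theorem~\ref{thm:solution_space}, each $e^{\lambda'_j t}\mathbf{u}'_j$ is a solution, since $\tfrac{d}{dt}(e^{\lambda'_j t}\mathbf{u}'_j) = \lambda'_j e^{\lambda'_j t}\mathbf{u}'_j = e^{\lambda'_j t}\mathbf{C}\mathbf{u}'_j = \mathbf{C}(e^{\lambda'_j t}\mathbf{u}'_j)$. Evaluating $\mathbf{\Phi}(t)$ at $t=0$ gives $\mathbf{\Phi}(0) = [\mathbf{u}'_1,\dots,\mathbf{u}'_{2n}]$, whose determinant is nonzero by the linear independence established above. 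Hence $\mathbf{\Phi}(t)$ is a fundamental matrix of solutions.

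Finally, the identity $\exp(\mathbf{C}t) = \mathbf{\Phi}(t)\mathbf{\Phi}^{-1}(0)$ is a classical corollary: both sides satisfy the matrix ODE $\tfrac{d}{dt}\mathbf{M}(t) = \mathbf{C}\mathbf{M}(t)$ with initial condition $\mathbf{M}(0)=\mathbf{I}_{2n}$, so uniqueness of solutions forces equality. The main technical obstacle I anticipate is ensuring that the $2n$ eigenvectors really are independent without additional hypotheses on $\widehat{\mathbf{L}}$; if $\widehat{\mathbf{L}}$ were only assumed to be a general graph operator rather than diagonalizable, one would have to lift the argument to a Jordan-form decomposition of the bottom block and replace pure exponentials with generalized $t^k e^{\lambda t}$ modes. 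In the present setting, however, symmetry of $\widehat{\mathbf{L}}$ sidesteps this issue, and the proof reduces to the verification chain outlined above.
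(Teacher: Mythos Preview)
Your proposal is correct and follows essentially the same route as the paper: block-diagonal factorization of the characteristic polynomial, direct verification of the eigenvectors, linear independence to conclude $\mathbf{\Phi}(t)$ is fundamental, and then the relation $\exp(\mathbf{C}t)=\mathbf{\Phi}(t)\mathbf{\Phi}^{-1}(0)$. The only cosmetic difference is in the final step: the paper argues via the standard fact that any two fundamental matrices are related by a constant nonsingular right factor (and evaluates at $t=0$), whereas you invoke uniqueness of the matrix IVP directly; these are equivalent formulations of the same classical result.
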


\textit{See Appendix~\ref{sec:solution_eigen} for proof.}

\begin{remark}
    Theorem~\ref{thm:solution_eigen} reveals that the solutions of Equation~(\ref{eq:hpde_graph}) lie within a particular space, which is spanned by a basis of eigenvectors describing the topological structure of the graph.
    In other words, \textbf{in the modeling of a system of hyperbolic partial differential equations, the node features propagate messages along specific directions of eigenvectors}. 
    The theorem not only enhances the interpretability of the paradigm of a system of hyperbolic partial differential equation but also ensures that the node features imply the fundamental characteristics of the topological structure of the graph.
\end{remark}

\subsection{Polynomial Approximation of Solution}

Note that Theorem~\ref{thm:solution_space} and Theorem~\ref{thm:solution_eigen} explicitly expound that for any graph signal in Equation~(\ref{eq:hpde_graph}), there exists a solution space determined by the topological structure of the graph and the propagation coefficient \(a\).
Among these, \(\mathbf{L}\) is a constant matrix, thus the only parameter that can be uniquely determined by the model is the propagation coefficient \(a\).
If we solely rely on \(a\) to regulate message passing, it is not only hard to depict intricate nonlinear relationships but also inevitably leads to models with \textit{significantly reduced flexibility and robustness}.
Moreover, the fundamental matrix of solution of Equation~(\ref{eq:Phi2}) still requires the computation of the eigenvalues and eigenvectors of the Laplacian matrix. 
Its computational complexity is \(O(n^3)\), which can \textit{significantly impact the efficiency of the model} when dealing with large-scale graphs. 
Therefore, in this section, we introduce polynomial approximation theory~\cite{approx} to address the aforementioned issue.

\begin{theorem}\label{thm:Wei}
    For a function \(f(x)\) that is continuous on \([a, b]\), for any \(\varepsilon > 0\), there always exists an algebraic polynomial \(P(x)\) such that
    \begin{equation}\label{eq:Wei}
        \Vert f(x) - P(x) \Vert_\infty < \varepsilon
    \end{equation}
    holds uniformly on \([a, b]\).
\end{theorem}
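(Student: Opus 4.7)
The plan is to give a constructive proof via Bernstein polynomials. First I would reduce to the interval $[0,1]$ by the affine change of variables $y = (x-a)/(b-a)$, so that approximating $f$ on $[a,b]$ is equivalent to approximating the continuous function $g(y) := f(a + (b-a)y)$ on $[0,1]$. If we can exhibit a polynomial $Q(y)$ with $\|g - Q\|_\infty < \varepsilon$ on $[0,1]$, then $P(x) := Q\bigl((x-a)/(b-a)\bigr)$ is the desired polynomial on $[a,b]$.

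Next I would define, for each integer $n \geq 1$, the Bernstein polynomial
\begin{equation*}
    B_n(g; y) = \sum_{k=0}^{n} g\!\left(\tfrac{k}{n}\right) \binom{n}{k} y^k (1-y)^{n-k},
\end{equation*}
and show that $B_n(g; \cdot) \to g$ uniformly on $[0,1]$. Using the identity $\sum_{k=0}^{n} \binom{n}{k} y^k (1-y)^{n-k} = 1$, I would write
\begin{equation*}
    g(y) - B_n(g; y) = \sum_{k=0}^{n} \bigl[g(y) - g(k/n)\bigr] \binom{n}{k} y^k (1-y)^{n-k}.
\end{equation*}
Since $g$ is continuous on the compact set $[0,1]$, it is uniformly continuous and bounded, say $|g| \le M$. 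For a given $\varepsilon > 0$, pick $\delta > 0$ so that $|g(y) - g(z)| < \varepsilon/2$ whenever $|y - z| < \delta$. Then I would split the sum into indices with $|k/n - y| < \delta$ (the ``good'' part) and $|k/n - y| \ge \delta$ (the ``bad'' part). The good part is trivially bounded by $\varepsilon/2$ using uniform continuity together with the fact that the weights sum to $1$.

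The main obstacle, and the step where the real work lies, is the ``bad'' part. Here I would exploit the second-moment identity for the binomial weights,
\begin{equation*}
    \sum_{k=0}^{n} (k - ny)^2 \binom{n}{k} y^k (1-y)^{n-k} = ny(1-y) \le \tfrac{n}{4},
\end{equation*}
which I would derive by differentiating the binomial expansion of $(y + (1-y))^n$ twice with respect to $y$ (or by recognizing it as the variance of a Binomial$(n, y)$ random variable). Applying a Chebyshev-type bound, the total weight of indices with $|k/n - y| \ge \delta$ is at most $1/(4n\delta^2)$. Multiplying by the crude bound $2M$ on $|g(y) - g(k/n)|$ and choosing $n \ge M/(\varepsilon \delta^2)$ makes the bad part less than $\varepsilon/2$. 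Combining the two estimates yields $\|g - B_n(g; \cdot)\|_\infty < \varepsilon$, so $P(x) := B_n\bigl(g; (x-a)/(b-a)\bigr)$ is the required algebraic polynomial. The uniform continuity / variance estimate interplay is the heart of the argument; everything else is bookkeeping.
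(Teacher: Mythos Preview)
Your proposal is correct: it is the classical Bernstein-polynomial proof of the Weierstrass approximation theorem, carried out cleanly with the affine reduction to $[0,1]$, the uniform-continuity split, and the binomial variance bound for the tail. There are no gaps.

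As for comparison with the paper: the paper does not actually prove this theorem. It is stated as a known result from approximation theory, with only a remark that ``the Bernstein polynomials have been demonstrated to hold uniformly on $[0,1]$'' and a citation. So your constructive Bernstein argument is exactly in the spirit of what the paper gestures at but omits; you are supplying the proof the paper takes for granted.
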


The Bernstein polynomials have been demonstrated to hold uniformly on \([0, 1]\) and have been practically applied in~\citet{BernNet}. 
Orthogonal polynomials are crucial tools for function approximation, where any polynomial can be expressed as a linear combination of orthogonal polynomials.
Therefore, we approximate the right-hand side of Equation~(\ref{eq:hpde_graph}) using a polynomial with respect to Laplacian:
\begin{equation}\label{eq:hpde_poly}
    \frac{\partial^2 \mathbf{X}}{\partial t^2} = a^2 \widehat{\mathbf{L}} \mathbf{X} \approx P(\mathbf{L}, t) \mathbf{X} = \sum^K_{k=0} \theta_{tk} p_k(\mathbf{L}) \mathbf{X},
\end{equation}
where \(p_k (\cdot)\) is the \(k\)-th order orthogonal polynomial.
Here, we replace the basic Laplacian matrix \(\widehat{\mathbf{L}}\) with the more commonly used symmetric normalized Laplacian matrix \(\mathbf{L}\), which differs only in the nonzero elements.

As introduced in Section Preliminaries, it is due to the achievements of pioneers in the field of graphs who introduced Laplacian polynomials into graph data that we have laid the foundation for establishing the connection between the system of hyperbolic partial differential equations and spectral GNNs. 
As shown in Table~\ref{tb:filters}, Chebyshev polynomials serve as an example of utilizing orthogonal polynomials to approximate graph spectral filter. 
The distinction lies in the fact that in this paper, we employ polynomial to approximate the solution space of the system of hyperbolic partial differential equations.
Then, based on Chebyshev polynomials, Equation~(\ref{eq:hpde_poly}) can be rewritten as:
\begin{equation}\label{eq:hpde_cheb}
    \frac{\partial^2 \mathbf{X}}{\partial t^2} = \sum^{K-1}_{k=0} T_k(\mathbf{L} - \mathbf{I}) \mathbf{X} \mathbf{W}_k.
\end{equation}

\subsection{Hyperbolic-PDE GNNs}

For a given PDE, solving an analytical solution is often challenging, numerical methods are commonly employed to approximate its solution in mathematics. 
This technique is currently extensively utilized within the realm of graph studies, with the \textbf{forward Euler method} standing out as the predominant choice~\cite{PDE-GCN, GRAND, GRAND++, HiD-Net, GWN}.
Specifically, the forward Euler method discretizes continuous time, dividing time into \(m\) equidistant segments with a time step \(\tau\): \(t = t_m = m \tau, m = 0, 1, \dots\).
The initial and terminal time are denoted as \(t_0\) and \(T = t_T\), and \(t_m\) represents the \(m\)-th time moment in time.

After discretizing time, Equation~(\ref{eq:hpde_poly}) can be reformulated into the following difference scheme:
\begin{equation}\label{eq:hpde_poly_scheme}
    \frac{\dot{\mathbf{X}} (t_m) - \dot{\mathbf{X}} (t_{m-1})}{\tau} = P(\mathbf{L}, t_m) \mathbf{X} (t_m),
\end{equation}
where \(\dot{\mathbf{X}} (t_m)\) and \(\dot{\mathbf{X}} (t_{m-1})\) are the first derivative w.r.t time: \(\dot{\mathbf{X}} (t_m) = \frac{\mathbf{X} (t_{m+1}) - \mathbf{X} (t_m)}{\tau}, \dot{\mathbf{X}} (t_{m-1}) = \frac{\mathbf{X} (t_m) - \mathbf{X}(t_{m-1})}{\tau}\).
Furthermore, to solve Equation~(\ref{eq:hpde_poly_scheme}), we initialize the initial values of node features and its first derivatives as:
\begin{equation}\label{eq:hpde_poly_init}
    \mathbf{X} (t_0) \!=\! \phi_0 (\mathbf{X}), \dot{\mathbf{X}} (t_0) \!=\! \frac{\mathbf{X} (t_1) \!-\! \mathbf{X} (t_{-1})}{2\tau} \!=\! \phi_1 (\mathbf{X}),
\end{equation}
where \(\phi_0 (\cdot)\) and \(\phi_1 (\cdot)\) can be obtained from nonlinear layers.
Substituting Equation~(\ref{eq:hpde_poly_init}) into Equation~(\ref{eq:hpde_poly_scheme}), we obtain the initial value at \(1\)st time step as:
\begin{equation}\label{eq:hpde_poly_t1}
    \mathbf{X} (t_1) = \tau \dot{\mathbf{X}} (t_0) + \left( \mathbf{I} + \frac{\tau^2}{2} P(\mathbf{L}, t_0) \right) \mathbf{X} (t_0).
\end{equation}
Given \(\mathbf{X} (t_0)\) and \(\mathbf{X} (t_1)\), we can derive the updating iterative formula for the node features at \(m\)-th time step:
\begin{equation}\label{eq:hpde_poly_tm}
    \mathbf{X} (t_{m+1}) = \left( 2\mathbf{I} + \tau^2 P(\mathbf{L}, t_m) \right) \mathbf{X} (t_m) - \mathbf{X} (t_{m-1}).
\end{equation}

\section{Experiments}

\subsection{Node Classification on Graphs}

\begin{table}[t]
    \caption{Dataset statistics.}
    \label{tb:dataset}
    \centering
    \small
    \begin{tabular}{lcccc}
        \toprule
        \textbf{Datesets} & \textbf{\#Nodes} & \textbf{\#Edges} & \textbf{\#Features} & \textbf{\#Classes} \\ 
        \midrule
        Cora & 2708 & 5278 & 1433 & 7 \\
        CiteSeer & 3327 & 4552 & 3703 & 6 \\
        PubMed & 19717 & 44324 & 500 & 3 \\
        Computers & 13752 & 245861 & 767 & 10 \\
        Photo & 7650 & 119081 & 745 & 8 \\
        CS & 18333 & 81894 & 6805 & 15 \\
        \midrule
        Actor & 7600 & 30019 & 932 & 5 \\
        Texas & 183 & 325 & 1703 & 5 \\
        Cornell & 183 & 298 & 1703 & 5 \\
        DeezerEurope & 28281 & 92752 & 128 & 2 \\
        \bottomrule
    \end{tabular}
\end{table}

We evaluate the performance of Hyperbolic-PDE GNN on the classic node classification task~\cite{homo1}.
Following the practice of~\citet{JacobiConv}, we conduct experiments on following real-world datasets: a) homophilic datasets include three citation networks, Cora, CiteSeer, and PubMed~\cite{homo1}, two Amazon co-purchase network, Computers and Photo~\cite{homo2}, and co-author network CS~\cite{homo1}; b) heterophilic datasets include co-occurrence graph Actor, two WebKB datasets, Texas and Cornell~\cite{hete1}, and social network DeezerEurope~\cite{hete2}.
Note that we do not utilize the Chameleon and Squirrel datasets~\cite{hete3} because they have serious issues of node duplication~\cite{no_wiki}.
Table~\ref{tb:dataset} shows the dataset statistics in our paper.

We choose classical spectral GNNs as baselines: GCN~\cite{GCN}, SGC~\cite{SGC}, APPNP~\cite{APPNP}, ARMA~\cite{ARMA}, GPR-GNN~\cite{GPR-GNN}, ChebNet~\cite{ChebNet}, BernNet~\cite{BernNet}, EvenNet~\cite{EvenNet}, JacobiConv~\cite{JacobiConv}, ChebNetII~\cite{ChebNetII}, OptBasisGNN~\cite{OptBasisGNN}, PyGNN~\cite{PyGNN}, SpecFormer~\cite{SpecFormer}, NFGNN~\cite{NFGNN}, and UniFilter~\cite{UniFilter}.

We split the datasets into training/validation/test sets in \(60\%/20\%/20\%\) and report the mean accuracy and standard deviation of the models over 10 random splits.
For our method, we utilize cross-entropy as the loss function and optimize parameters using the Adam optimizer~\cite{Adam}. 
We conduct experiments on a NVIDIA Tesla V100 GPU with 16GB memory and Intel Xeon E5-2660 v4 CPUs.
The experimental code is implemented based on PyTorch and PyTorch Geometric~\cite{PyG}, and we employ wandb~\footnote{\url{https://github.com/wandb/wandb}} for parameter search. 
\textit{See Appendix~\ref{sec:setup} for specific experimental settings.}

\begin{table}[t]
    \caption{The results of spectral GNNs: mean accuracy (\%)\(\pm\) standard deviation on 60\%/20\%/20\% data splits on 10 runs. \textbf{Bold} and \underline{underline} indicate optimal and suboptimal results.}
    \label{tb:main}
    \centering
    \resizebox{0.48\textwidth}{!}{
    \begin{tabular}{lcccc}
        \toprule
        Methods & \textbf{Cora} & \textbf{CiteSeer} & \textbf{PubMed} & \textbf{Actor} \\
        \midrule
        GCN & \(87.14_{\pm 1.01}\) & \(79.86_{\pm 0.67}\) & \(86.74_{\pm 0.27}\) & \(33.23_{\pm 1.16}\) \\
        APPNP & \(88.14_{\pm 0.73}\) & \(80.47_{\pm 0.74}\) & \(88.12_{\pm 0.31}\) & \(39.66_{\pm 0.55}\) \\
        ARMA & \(87.13_{\pm 0.80}\) & \(80.04_{\pm 0.55}\) & \(86.93_{\pm 0.24}\) & \(37.67_{\pm 0.54}\) \\
        GPR-GNN & \(88.57_{\pm 0.69}\) & \(80.12_{\pm 0.83}\) & \(88.46_{\pm 0.33}\) & \(39.92_{\pm 0.67}\) \\
        ChebNet & \(86.67_{\pm 0.82}\) & \(79.11_{\pm 0.75}\) & \(87.95_{\pm 0.28}\) & \(37.61_{\pm 0.85}\) \\
        BernNet & \(88.95_{\pm 0.95}\) & \(80.09_{\pm 0.97}\) & \(88.48_{\pm 0.41}\) & \(41.79_{\pm 1.01}\) \\
        JacobiConv & \(88.98_{\pm 0.46}\) & \(80.78_{\pm 0.79}\) & \(89.62_{\pm 0.41}\) & \(41.17_{\pm 0.64}\) \\
        ChebNetII & \(88.71_{\pm 0.93}\) & \(80.53_{\pm 0.79}\) & \(88.93_{\pm 0.29}\) & \(41.75_{\pm 1.07}\) \\
        EvenNet & \(87.77_{\pm 0.67}\) & \(78.51_{\pm 0.63}\) & \(90.87_{\pm 0.34}\) & \(40.36_{\pm 0.65}\) \\
        OptBasisGNN & \(87.00_{\pm 1.55}\) & \(80.58_{\pm 0.82}\) & \(90.30_{\pm 0.19}\) & \(\bf{42.39_{\pm 0.52}}\) \\
        PyGNN & \(88.34_{\pm 0.31}\) & \(79.49_{\pm 0.45}\) & \(89.52_{\pm 0.24}\) & \(-\) \\
        SpecFormer & \(88.57_{\pm 1.01}\) & \(\underline{81.49_{\pm 1.32}}\) & \(87.73_{\pm 0.58}\) & \(41.93_{\pm 1.04}\) \\
        NFGNN & \(\underline{89.82_{\pm 0.43}}\) & \(80.56_{\pm 0.55}\) & \(89.89_{\pm 0.68}\) & \(40.62_{\pm 0.38}\) \\
        UniFilter & \(89.49_{\pm 1.35}\) & \(81.39_{\pm 1.32}\) & \(\bf{91.44_{\pm 0.50}}\) & \(40.84_{\pm 1.21}\) \\
        \midrule
        \textbf{Our methods} & \(\bf{90.82_{\pm 0.95}}\) & \(\bf{81.88_{\pm 1.33}}\) & \(\underline{91.36_{\pm 0.59}}\) & \(\underline{42.03_{\pm 1.59}}\) \\
        \bottomrule
    \end{tabular}
    }
\end{table}

\begin{table*}[t]
    \caption{The results of base models and Hyperbolic-PDE GNN: mean accuracy (\%)\(\pm\)standard deviation on 60\%/20\%/20\% data splits on 10 runs. \colorbox{lightgray}{\textbf{Bold}} indicates optimal results for all methods, and \textbf{Bold} indicates optimal results between Hyperbolic-PDE GNN and base model. \(*\) indicates reproduced results.}
    \label{tb:base_model}
    \centering
    \resizebox{\textwidth}{!}{
    \begin{tabular}{lccccccccccc}
        \toprule
        Methods & \textbf{Cora} & \textbf{CiteSeer} & \textbf{PubMed} & \textbf{Computers} & \textbf{Photo} & \textbf{CS}\(^*\) & \textbf{Actor} & \textbf{Texas} & \textbf{Cornell} & \textbf{DeezerEurope}\(^*\) \\
        \midrule
        SGC & \(85.48_{\pm 1.48}\) & \(\bf{80.75_{\pm 1.15}}\) & \(85.36_{\pm 0.52}\) & \(\bf{88.19_{\pm 0.45}}*\) & \(93.60_{\pm 0.90}*\) & \(95.13_{\pm 0.25}\) & \(28.81_{\pm 1.11}\) & \(81.31_{\pm 3.30}\) & \(72.62_{\pm 9.92}\) & \(63.15_{\pm 0.90}\) \\
        \textbf{Hyperbolic-SGC} & \(\bf{86.22_{\pm 1.24}}\) & \(78.73_{\pm 1.18}\) & \(\bf{88.25_{\pm 0.83}}\) & \(87.96_{\pm 1.12}\) & \(\bf{94.09_{\pm 0.58}}\) & \(\bf{96.46_{\pm 0.24}}\) & \(\bf{41.49_{\pm 1.37}}\) & \(\bf{93.61_{\pm 2.61}}\) & \(\bf{91.28_{\pm 2.55}}\) & \(\bf{69.02_{\pm 0.72}}\) \\
        \(Improv. (\Delta)\) & \textcolor{red}{\(\mathbf{\uparrow 0.74}\)} & \(\downarrow 2.02\) & \textcolor{red}{\(\mathbf{\uparrow 2.89}\)} & \(\downarrow 0.23\) & \textcolor{red}{\(\mathbf{\uparrow 0.49}\)} & \textcolor{red}{\(\mathbf{\uparrow 1.33}\)} & \textcolor{red}{\(\mathbf{\uparrow 13.68}\)} & \textcolor{red}{\(\mathbf{\uparrow 12.30}\)} & \textcolor{red}{\(\mathbf{\uparrow 18.66}\)} & \textcolor{red}{\(\mathbf{\uparrow 5.87}\)} \\
        \midrule
        APPNP & \(88.14_{\pm 0.73}\) & \(80.47_{\pm 0.74}\) & \(88.12_{\pm 0.31}\) & \(85.32_{\pm 0.37}\) & \(88.51_{\pm 0.31}\) & \(95.83_{\pm 0.32}\) & \(39.66_{\pm 0.55}\) & \(90.98_{\pm 1.64}\) & \(\bf{91.81_{\pm 1.96}}\) & \(68.53_{\pm 0.36}\) \\
        \textbf{Hyperbolic-APPNP} & \(\bf{90.07_{\pm 1.11}}\) & \(\bf{81.69_{\pm 1.12}}\) & \(\bf{90.79_{\pm 0.57}}\) & \(\bf{89.88_{\pm 1.09}}\) & \(\bf{94.72_{\pm 0.75}}\) & \(\bf{96.58_{\pm 0.16}}\) & \(\bf{40.19_{\pm 1.09}}\) & \(\bf{91.31_{\pm 3.46}}\) & \(87.45_{\pm 3.81}\) & \(\bf{68.78_{\pm 0.39}}\) \\
        \(Improv. (\Delta)\) & \textcolor{red}{\(\mathbf{\uparrow 1.93}\)} & \textcolor{red}{\(\mathbf{\uparrow 1.22}\)} & \textcolor{red}{\(\mathbf{\uparrow 2.67}\)} & \textcolor{red}{\(\mathbf{\uparrow 4.56}\)} & \textcolor{red}{\(\mathbf{\uparrow 6.21}\)} & \textcolor{red}{\(\mathbf{\uparrow 0.75}\)} & \textcolor{red}{\(\mathbf{\uparrow 0.53}\)} & \textcolor{red}{\(\mathbf{\uparrow 0.33}\)} & \(\downarrow 4.36\) & \textcolor{red}{\(\mathbf{\uparrow 0.25}\)} \\
        \midrule
        GPR-GNN & \(88.57_{\pm 0.69}\) & \(80.12_{\pm 0.83}\) & \(88.46_{\pm 0.33}\) & \(86.85_{\pm 0.25}\) & \(93.85_{\pm 0.28}\) & \(96.47_{\pm 0.19}\) & \(39.92_{\pm 0.67}\) & \(92.95_{\pm 1.31}\) & \(91.37_{\pm 1.81}\) & \colorbox{lightgray}{\(\bf{69.39_{\pm 0.87}}\)} \\
        \textbf{Hyperbolic-GPR} & \colorbox{lightgray}{\(\bf{90.82_{\pm 0.95}}\)} & \colorbox{lightgray}{\(\bf{81.88_{\pm 1.33}}\)} & \colorbox{lightgray}{\(\bf{91.36_{\pm 0.59}}\)} & \(\bf{88.21_{\pm 0.50}}\) & \(\bf{94.49_{\pm 0.63}}\) & \(\bf{96.58_{\pm 0.19}}\) & \(\bf{40.68_{\pm 0.85}}\) & \(\bf{93.28_{\pm 3.58}}\) & \(\bf{92.77_{\pm 3.36}}\) & \(68.62_{\pm 0.62}\) \\
        \(Improv. (\Delta)\) & \textcolor{red}{\(\mathbf{\uparrow 2.25}\)} & \textcolor{red}{\(\mathbf{\uparrow 1.76}\)} & \textcolor{red}{\(\mathbf{\uparrow 2.90}\)} & \textcolor{red}{\(\mathbf{\uparrow 1.36}\)} & \textcolor{red}{\(\mathbf{\uparrow 0.64}\)} & \textcolor{red}{\(\mathbf{\uparrow 0.11}\)} & \textcolor{red}{\(\mathbf{\uparrow 0.76}\)} & \textcolor{red}{\(\mathbf{\uparrow 0.33}\)} & \textcolor{red}{\(\mathbf{\uparrow 1.40}\)} & \(\downarrow 0.77\) \\
        \midrule
        ChebNet & \(86.67_{\pm 0.82}\) & \(79.11_{\pm 0.75}\) & \(87.95_{\pm 0.28}\) & \(87.54_{\pm 0.43}\) & \(93.77_{\pm 0.32}\) & \(95.66_{\pm 0.28}\) & \(37.61_{\pm 0.89}\) & \(86.22_{\pm 2.45}\) & \(83.93_{\pm 2.13}\) & \(\bf{68.88_{\pm 0.81}}\) \\
        \textbf{Hyperbolic-Cheb} & \(\bf{89.38_{\pm 1.00}}\) & \(\bf{81.28_{\pm 1.69}}\) & \(\bf{90.50_{\pm 0.61}}\) & \(\bf{89.16_{\pm 0.78}}\) & \(\bf{94.83_{\pm 0.44}}\) & \(\bf{96.50_{\pm 0.28}}\) & \(\bf{40.63_{\pm 1.38}}\) & \colorbox{lightgray}{\(\bf{93.93_{\pm 2.05}}\)} & \(\bf{91.06_{\pm 4.46}}\) & \(68.75_{\pm 0.63}\) \\
        \(Improv. (\Delta)\) & \textcolor{red}{\(\mathbf{\uparrow 2.71}\)} & \textcolor{red}{\(\mathbf{\uparrow 2.17}\)} & \textcolor{red}{\(\mathbf{\uparrow 2.65}\)} & \textcolor{red}{\(\mathbf{\uparrow 1.62}\)} & \textcolor{red}{\(\mathbf{\uparrow 1.06}\)} & \textcolor{red}{\(\mathbf{\uparrow 0.84}\)} & \textcolor{red}{\(\mathbf{\uparrow 3.02}\)} & \textcolor{red}{\(\mathbf{\uparrow 7.71}\)} & \textcolor{red}{\(\mathbf{\uparrow 7.13}\)} & \(\downarrow 0.13\) \\
        \midrule
        BernNet & \(\bf{88.52_{\pm 0.95}}\) & \(80.09_{\pm 0.79}\) & \(88.48_{\pm 0.41}\) & \(87.64_{\pm 0.44}\) & \(93.63_{\pm 0.35}\) & \(96.54_{\pm 0.23}\) & \colorbox{lightgray}{\(\bf{41.79_{\pm 1.01}}\)} & \(\bf{93.12_{\pm 0.65}}\) & \(\bf{92.13_{\pm 1.64}}\) & \(68.67_{\pm 0.55}\) \\
        \textbf{Hyperbolic-Bern} & \(88.34_{\pm 0.92}\) & \(\bf{80.27_{\pm 0.76}}\) & \(\bf{89.52_{\pm 0.80}}\) & \(\bf{88.61_{\pm 0.93}}\) & \(\bf{94.24_{\pm 0.87}}\) & \(\bf{96.64_{\pm 0.28}}\) & \(39.96_{\pm 0.98}\) & \(93.11_{\pm 1.29}\) & \(89.79_{\pm 3.45}\) & \(\bf{68.74_{\pm 0.75}}\) \\
        \(Improv. (\Delta)\) & \(\downarrow 0.18\) & \textcolor{red}{\(\mathbf{\uparrow 0.18}\)} & \textcolor{red}{\(\mathbf{\uparrow 1.04}\)} & \textcolor{red}{\(\mathbf{\uparrow 0.97}\)} & \textcolor{red}{\(\mathbf{\uparrow 0.61}\)} & \textcolor{red}{\(\mathbf{\uparrow 0.10}\)} & \(\downarrow 1.83\) & \(\downarrow 0.01\) & \(\downarrow 2.34\) & \textcolor{red}{\(\mathbf{\uparrow 0.07}\)} \\
        \midrule
        JacobiConv & \(88.98_{\pm 0.46}\) & \(80.78_{\pm 0.79}\) & \(89.62_{\pm 0.41}\) & \(\bf{90.39_{\pm 0.29}}\) & \(\bf{95.43_{\pm 0.23}}\) & \(96.54_{\pm 0.26}\) & \(\bf{41.17_{\pm 0.64}}\) & \(\bf{93.44_{\pm 2.13}}\) & \colorbox{lightgray}{\(\bf{92.95_{\pm 2.46}}\)} & \(68.66_{\pm 0.92}\) \\
        \textbf{Hyperbolic-Jacobi} & \(\bf{89.69_{\pm 1.83}}\) & \(\bf{81.17_{\pm 1.63}}\) & \(\bf{90.17_{\pm 0.66}}\) & \(88.69_{\pm 0.79}\) & \(94.53_{\pm 0.75}\) & \(\bf{96.58_{\pm 0.25}}\) & \(39.73_{\pm 0.84}\) & \(91.97_{\pm 4.19}\) & \(90.64_{\pm 5.13}\) & \(\bf{68.76_{\pm 0.96}}\) \\
        \(Improv. (\Delta)\) & \textcolor{red}{\(\mathbf{\uparrow 0.71}\)} & \textcolor{red}{\(\mathbf{\uparrow 0.39}\)} & \textcolor{red}{\(\mathbf{\uparrow 0.55}\)} & \(\downarrow 1.70\) & \(\downarrow 0.90\) & \textcolor{red}{\(\mathbf{\uparrow 0.04}\)} & \(\downarrow 1.44\) & \(\downarrow 1.47\) & \(\downarrow 2.31\) & \textcolor{red}{\(\mathbf{\uparrow 0.10}\)} \\
        \midrule
        ChebNetII & \(88.71_{\pm 0.93}\) & \(80.53_{\pm 0.79}\) & \(88.93_{\pm 0.29}\) & \colorbox{lightgray}{\(\bf{91.06_{\pm 0.64}*}\)} & \colorbox{lightgray}{\(\bf{95.88_{\pm 0.58}*}\)} & \(96.49_{\pm 0.25}\) & \(\bf{41.75_{\pm 1.07}}\) & \(93.28_{\pm 1.47}\) & \(\bf{92.30_{\pm 1.48}}\) & \(\bf{68.95_{\pm 0.58}}\) \\
        \textbf{Hyperbolic-ChebII} & \(\bf{90.39_{\pm 0.95}}\) & \(\bf{81.60_{\pm }1.38}\) & \(\bf{90.34_{\pm 0.77}}\) & \(89.82_{\pm 0.68}\) & \(94.53_{\pm 0.46}\) & \colorbox{lightgray}{\(\bf{96.67_{\pm 0.23}}\)} & \(40.58_{\pm 1.16}\) & \(\bf{93.30_{\pm 3.32}}\) & \(91.06_{\pm 3.86}\) & \(68.68_{\pm 0.48}\) \\
        \(Improv. (\Delta)\) & \textcolor{red}{\(\mathbf{\uparrow 1.67}\)} & \textcolor{red}{\(\mathbf{\uparrow 1.07}\)} & \textcolor{red}{\(\mathbf{\uparrow 1.41}\)} & \(\downarrow 1.24\) & \(\downarrow 1.35\) & \textcolor{red}{\(\mathbf{\uparrow 0.18}\)} & \(\downarrow 1.17\) & \textcolor{red}{\(\mathbf{\uparrow 0.02}\)} & \(\downarrow 1.24\) & \(\downarrow 0.27\) \\
        \bottomrule
    \end{tabular}
    }
\end{table*}

\begin{figure*}
    \centering
    \begin{minipage}{0.245\textwidth}
        \centering
        \includegraphics[width=\textwidth]{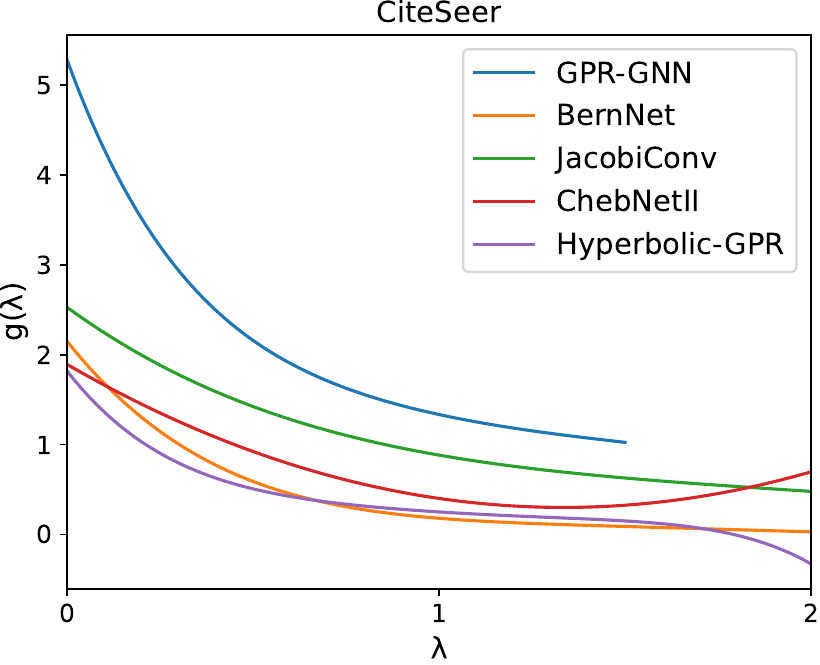}
    \end{minipage}
    \begin{minipage}{0.245\textwidth}
        \centering
        \includegraphics[width=\textwidth]{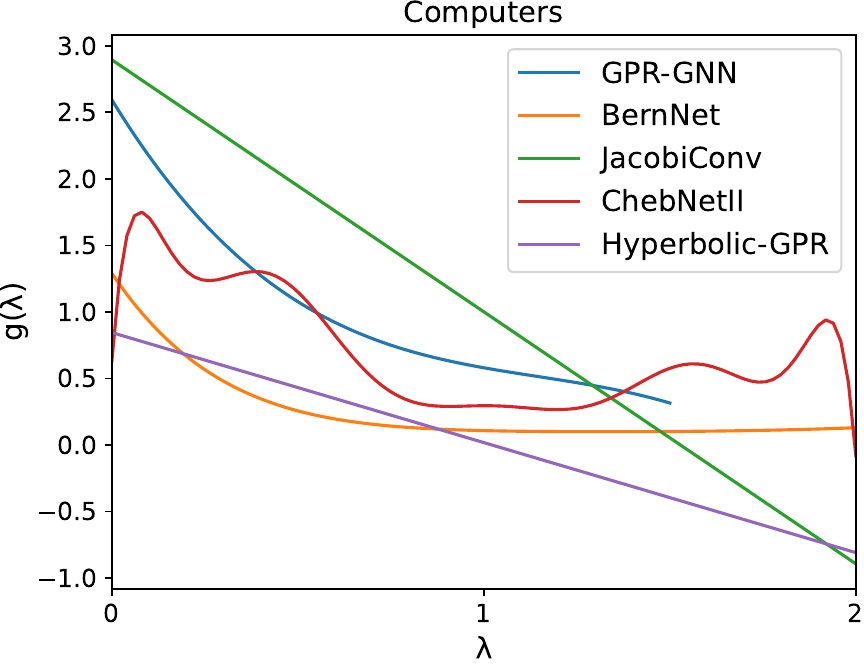}
    \end{minipage}
    \begin{minipage}{0.245\textwidth}
        \centering
        \includegraphics[width=\textwidth]{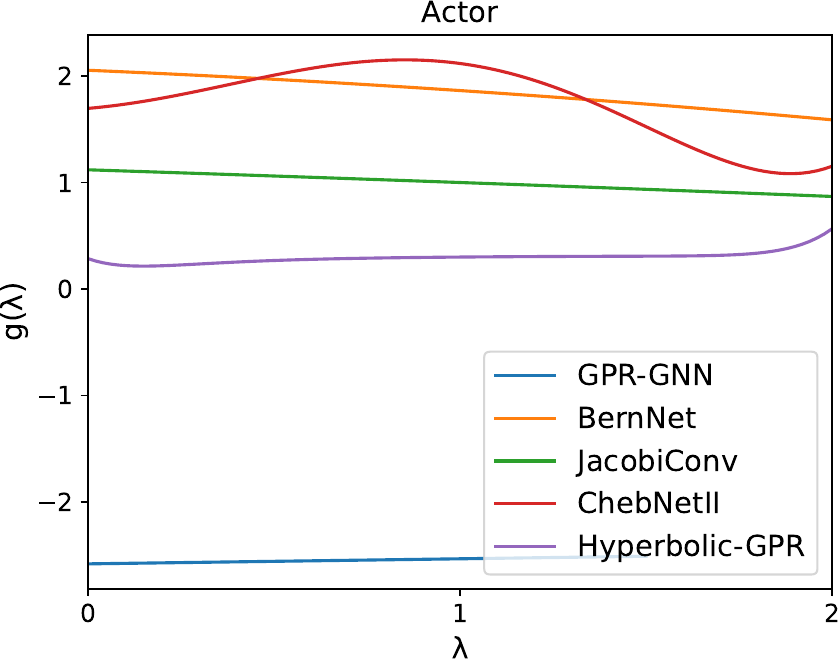}
    \end{minipage}
    \begin{minipage}{0.245\textwidth}
        \centering
        \includegraphics[width=\textwidth]{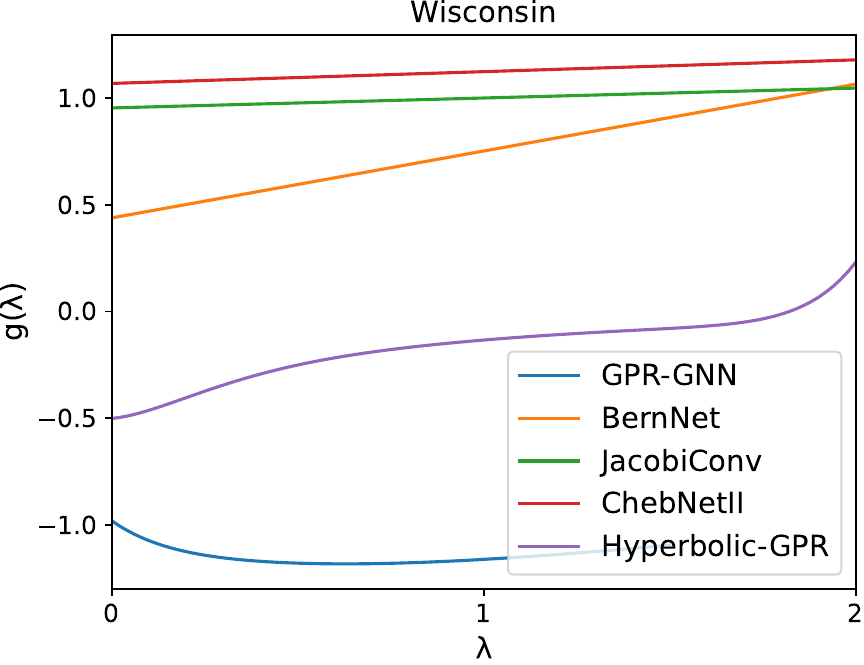}
    \end{minipage}
    \caption{The graph spectral filter learned by the methods on graph datasets.}
    \label{fig:filter}
\end{figure*}

Table~\ref{tb:main} illustrates the performance of Hyperbolic-PDE GNN on four prominent benchmarks. 
It can be observed that Hyperbolic-PDE GNN achieves optimal performance on Cora and CiteSeer, with a 1\% improvement on Cora. 
Additionally, our method also demonstrates competitive performance on PubMed and Actor. 
These results attest to the effectiveness of Hyperbolic-PDE GNN.

To further validate the performance improvement of modeling message passing as a system of hyperbolic partial differential equations, we choose several base models and modify them accordingly. 
Specifically, we let the polynomial \(P(x)\) in Equation~(\ref{eq:hpde_poly}) to be the spectral graph convolution of the base model. 
Initially, we choose base models such as SGC, APPNP, and GPR-GNN based on monomials. 
Subsequently, we choose ChebNet, BernNet, JacobiConv, and ChebNetII as convolution forms, which are based on polynomials as basis. 
Table~\ref{tb:base_model} indicate that methods based on monomials underperformed due to the insufficient ability of monomials to approximate filtering functions. 
In contrast, methods based on polynomials exhibit lower errors in function approximation, thus demonstrating superior performance.
Upon enhancing all these methods to the system of hyperbolic PDEs, they all show varying degrees of performance improvement.
Particularly for methods based on monomials, the introduction of the mechanism help to mitigate their high error (further experiment can be found in Section~\ref{sec:filter}. 
Our approach explicitly constrains the solution space of all aforementioned base models, ensuring that they are solely expressed by eigenvectors representing the topological structure and do not introduce additional features.

\begin{table*}[!t]
    \caption{Average of squared error (\(R^2\) score) of methods over 50 images.}
    \label{tb:error}
    \centering
    \large
    \resizebox{\textwidth}{!}{
    \begin{tabular}{lccccccc}
        \toprule
        \makebox[0.15\textwidth][c]{\multirow{3}{*}{}} & \makebox[0.15\textwidth][c]{\textbf{Low-pass}} & \makebox[0.15\textwidth][c]{\textbf{High-pass}} & \makebox[0.15\textwidth][c]{\textbf{Band-pass}} & \makebox[0.15\textwidth][c]{\textbf{Band-rejection}} & \makebox[0.15\textwidth][c]{\textbf{Comb}} & \makebox[0.1\textwidth][c]{\textbf{Low-band-pass}} & \makebox[0.15\textwidth][c]{\textbf{Runge}} \\
        \cmidrule{2-8}
        & \(e^{-10\lambda^2}\) & \(1 - e^{-10\lambda^2}\) & \(e^{-10(\lambda - 1)^2}\) & \(1 - e^{-10(\lambda - 1)^2}\) & \(|\sin(\pi \lambda)|\) & \scriptsize\(\left\{ 
        \begin{aligned}
            &1, \lambda < 0.5 \\
            &e^{-100(\lambda-0.5)^2}, 0.5 \leq \lambda < 1 \\
            &e^{-50(\lambda-1.5)^2}, \lambda \geq 1
        \end{aligned} \right.\) & \(1 / (1 + 25\lambda^2)\) \\
        & \includegraphics[width=0.18\textwidth]{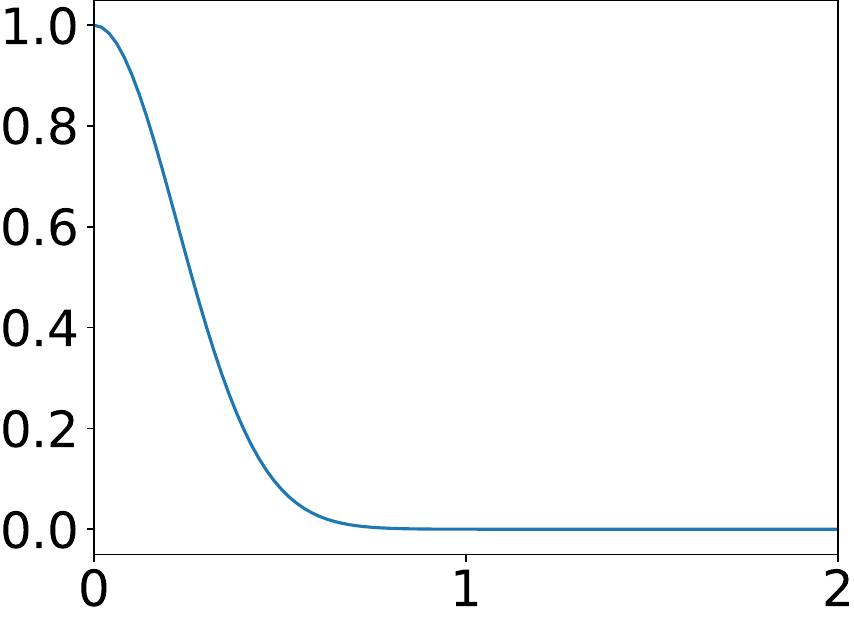} & \includegraphics[width=0.18\textwidth]{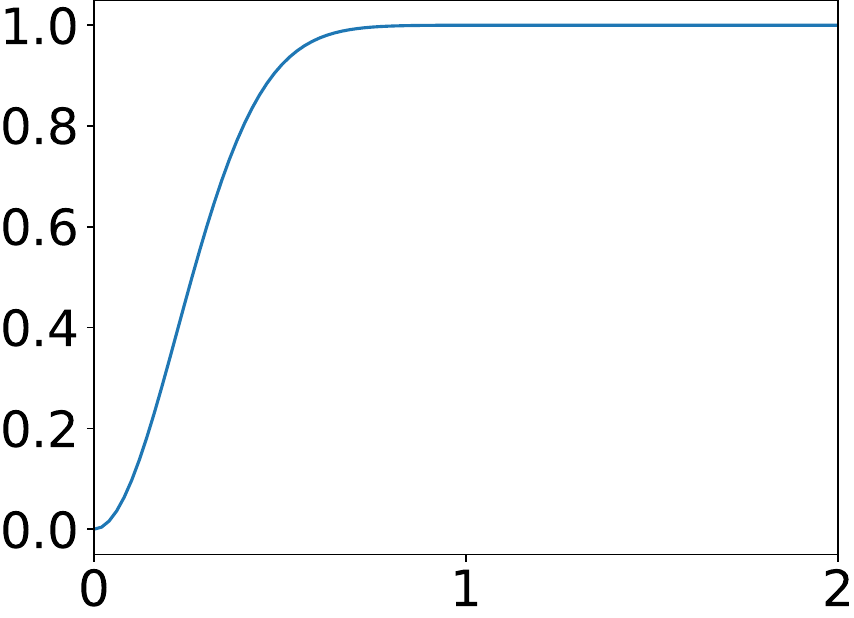} & \includegraphics[width=0.18\textwidth]{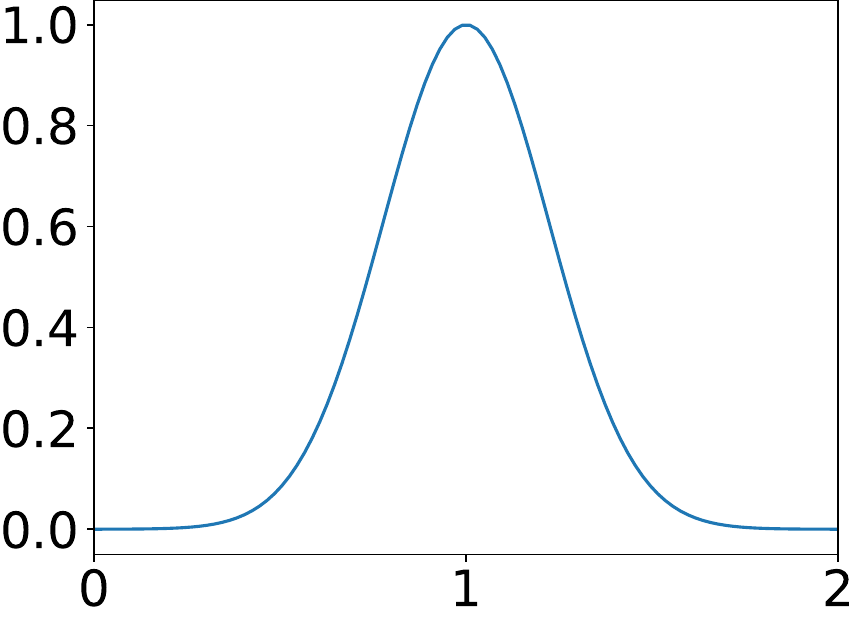} & \includegraphics[width=0.18\textwidth]{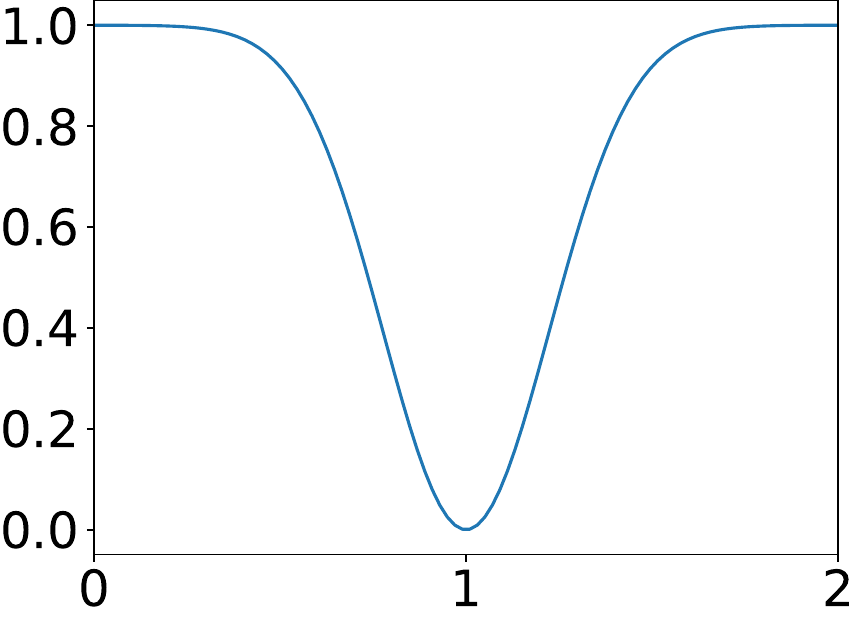} & \includegraphics[width=0.18\textwidth]{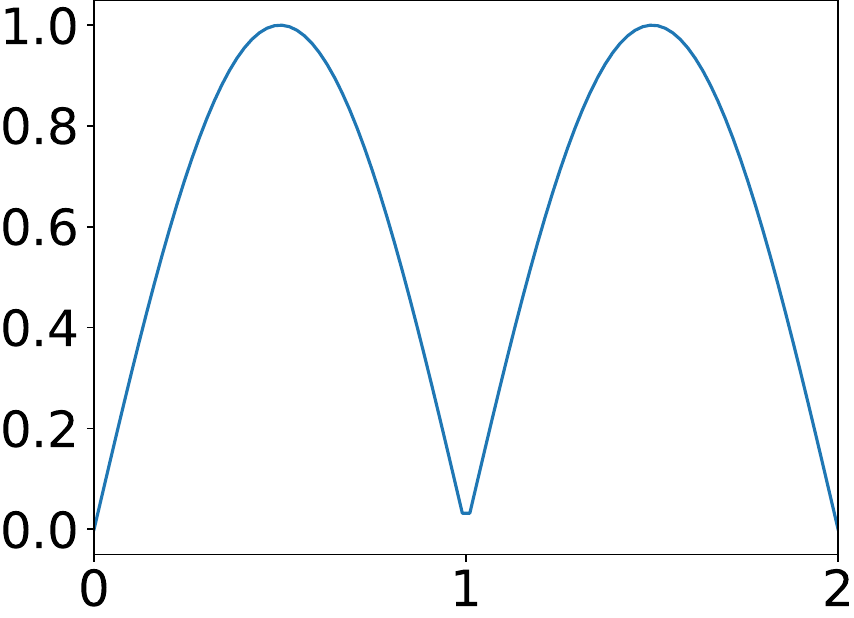} & \includegraphics[width=0.18\textwidth]{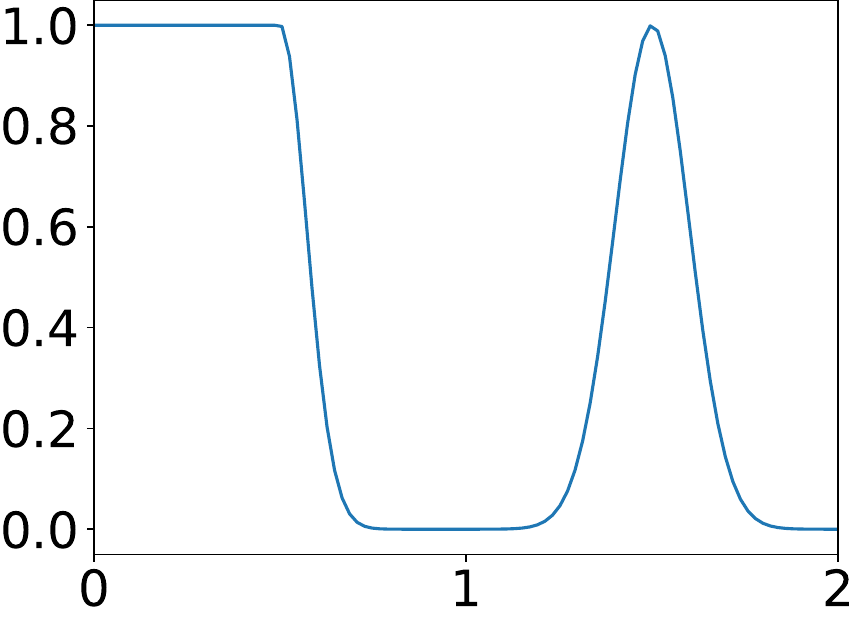} & \includegraphics[width=0.18\textwidth]{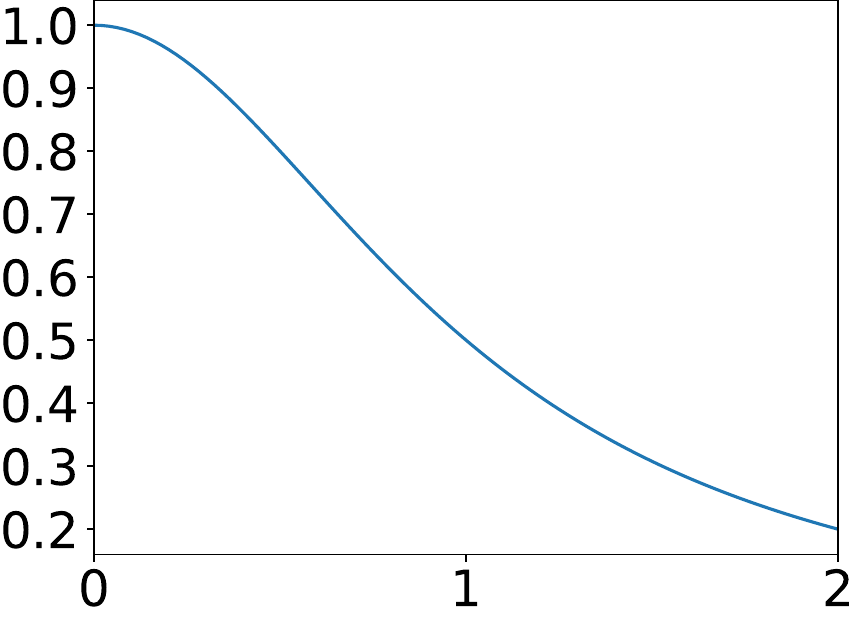} \\
        \midrule
        GCN & \(3.5364(.9871)\) & \(67.1895(.2449)\) & \(25.9193(.1061)\) & \(20.9094(.9440)\) & \(51.0312(.2933)\) & \(15.5343(.9586)\) & \(3.7937(.9860)\) \\
        \textbf{Hyperbolic-GCN} & \(\bf{1.3562(.9956)}\) & \(\bf{1.7571(.9789)}\) & \(\bf{4.4976(.7913)}\) & \(\bf{3.0469(.9925)}\) & \(\bf{8.8112(.8657)}\) & \(\bf{4.6308(.9869)}\) & \(\bf{0.6583(.9979)}\) \\
        \(Improv. (\%)\) & \textcolor{red}{\(\bf{\downarrow 61.65\%}\)} & \textcolor{red}{\(\bf{\downarrow 97.38\%}\)} & \textcolor{red}{\(\bf{\downarrow 82.65\%}\)} & \textcolor{red}{\(\bf{\downarrow 85.43\%}\)} & \textcolor{red}{\(\bf{\downarrow 82.73\%}\)} & \textcolor{red}{\(\bf{\downarrow 70.19\%}\)} & \textcolor{red}{\(\bf{\downarrow 83.43\%}\)} \\
        \midrule
        GAT & \(2.6614(.9899)\) & \(22.1274(.7457)\) & \(13.7478(.4942)\) & \(12.8713(.9650)\) & \(22.1316(.6985)\) & \(13.2587(.9634)\) & \(2.8153(.9890)\) \\
        \textbf{Hyperbolic-GAT} & \(\bf{1.2671(.9963)}\) & \(\bf{1.3592(.9822)}\) & \(\bf{2.1492(.9152)}\) & \(\bf{2.5862(.9933)}\) & \(\bf{6.0365(.9055)}\) & \(\bf{4.7609(.9866)}\) & \(\bf{0.6192(.9979)}\) \\
        \(Improv. (\%)\) & \textcolor{red}{\(\bf{\downarrow 52.39\%}\)} & \textcolor{red}{\(\bf{\downarrow 93.86\%}\)} & \textcolor{red}{\(\bf{\downarrow 84.37\%}\)} & \textcolor{red}{\(\bf{\downarrow 79.71\%}\)} & \textcolor{red}{\(\bf{\downarrow 72.72\%}\)} & \textcolor{red}{\(\bf{\downarrow 64.09\%}\)} & \textcolor{red}{\(\bf{\downarrow 78.01\%}\)} \\
        \midrule
        ARMA & \(1.8344(.9931)\) & \(1.8304(.9793)\) & \(7.6932(.7099)\) & \(8.1734(.9785)\) & \(15.2385(.7952)\) & \(11.5748(.9677)\) & \(2.1851(.9913)\) \\
        \textbf{Hyperbolic-ARMA} & \(\bf{0.9699(.9964)}\) & \(\bf{1.0263(.9854)}\) & \(\bf{1.7474(.9212)}\) & \(\bf{1.6034(.9956)}\) & \(\bf{5.9381(.9104)}\) & \(\bf{4.9021(.9858)}\) & \(\bf{0.4637(.9982)}\) \\
        \(Improv. (\%)\) & \textcolor{red}{\(\bf{\downarrow 47.13\%}\)} & \textcolor{red}{\(\bf{\downarrow 43.93\%}\)} & \textcolor{red}{\(\bf{\downarrow 77.29\%}\)} & \textcolor{red}{\(\bf{\downarrow 80.38\%}\)} & \textcolor{red}{\(\bf{\downarrow 61.03\%}\)} & \textcolor{red}{\(\bf{\downarrow 57.65\%}\)} & \textcolor{red}{\(\bf{\downarrow 78.78\%}\)} \\
        \midrule
        ChebNet & \(0.8024(.9973)\) & \(0.7877(.9903)\) & \(2.3232(.9090)\) & \(2.5446(.9934)\) & \(4.0797(.9443)\) & \(4.4431(.9878)\) & \(0.4048(.9988)\) \\
        \textbf{Hyperbolic-Cheb} & \(\bf{0.0077(1.000)}\) & \(\bf{0.0088(.9999)}\) & \(\bf{0.0567(.9978)}\) & \(\bf{0.1077(.9997)}\) & \(\bf{0.2019(.9969)}\) & \colorbox{lightgray}{\(\bf{0.2276(.9994)}\)} & \(\bf{0.0169(.9999)}\) \\
        \(Improv. (\%)\) & \textcolor{red}{\(\bf{\downarrow 99.04\%}\)} & \textcolor{red}{\(\bf{\downarrow 98.88\%}\)} & \textcolor{red}{\(\bf{\downarrow 97.56\%}\)} & \textcolor{red}{\(\bf{\downarrow 95.77\%}\)} & \textcolor{red}{\(\bf{\downarrow 97.50\%}\)} & \textcolor{red}{\(\bf{\downarrow 94.88\%}\)} & \textcolor{red}{\(\bf{\downarrow 95.83\%}\)} \\
        \midrule
        BernNet & \(0.0309(.9999)\) & \(\bf{0.0103(.9998)}\) & \(0.0477(.9982)\) & \(0.9434(.9973)\) & \(1.1073(.9854)\) & \(2.9633(.9917)\) & \(0.0346(.9999)\) \\
        \textbf{Hyperbolic-Bern} & \(\bf{0.0045(1.000)}\) & \(0.0138(.9998)\) & \(\bf{0.0154(.9992)}\) & \(\bf{0.1532(.9996)}\) & \colorbox{lightgray}{\(\bf{0.1921(.9973)}\)} & \(\bf{0.6738(.9982)}\) & \(\bf{0.0256(.9999)}\) \\
        \(Improv. (\%)\) & \textcolor{red}{\(\bf{\downarrow 85.44\%}\)} & \(\uparrow 33.98\%\) & \textcolor{red}{\(\bf{\downarrow 67.71\%}\)} & \textcolor{red}{\(\bf{\downarrow 83.76\%}\)} & \textcolor{red}{\(\bf{\downarrow 82.65\%}\)} & \textcolor{red}{\(\bf{\downarrow 77.26\%}\)} &  \textcolor{red}{\(\bf{\downarrow 26.01\%}\)} \\
        \midrule
        ChebNetII & \(0.0068(1.000)\) & \(0.0028(1.000)\) & \(0.0980(.9760)\) & \(0.0162(1.000)\) & \(\bf{0.2832(.9964)}\) & \(1.3945(.9964)\) & \(0.0089(.9999)\) \\
        \textbf{Hyperbolic-ChebII} & \colorbox{lightgray}{\(\bf{0.0039(1.000)}\)} & \colorbox{lightgray}{\(\bf{0.0017(.9999)}\)} & \colorbox{lightgray}{\(\bf{0.0099(.9991)}\)} & \colorbox{lightgray}{\(\bf{0.0125(1.000)}\)} & \(0.2918(.9962)\) & \(\bf{1.3800(.9962)}\) & \colorbox{lightgray}{\(\bf{0.0059(1.000)}\)} \\
        \(Improv. (\%)\) & \textcolor{red}{\(\bf{\downarrow 42.65\%}\)} & \textcolor{red}{\(\bf{\downarrow 39.29\%}\)} & \textcolor{red}{\(\bf{\downarrow 89.90\%}\)} & \textcolor{red}{\(\bf{\downarrow 22.22\%}\)} & \(\uparrow 3.04\%\) & \textcolor{red}{\(\bf{\downarrow 1.04\%}\)} & \textcolor{red}{\(\bf{\downarrow 33.71\%}\)} \\
        \bottomrule
    \end{tabular}
    }
\end{table*}

\begin{figure*}[h!]
    \centering
    \begin{minipage}{0.135\textwidth}
        \centering
        \includegraphics[width=\textwidth]{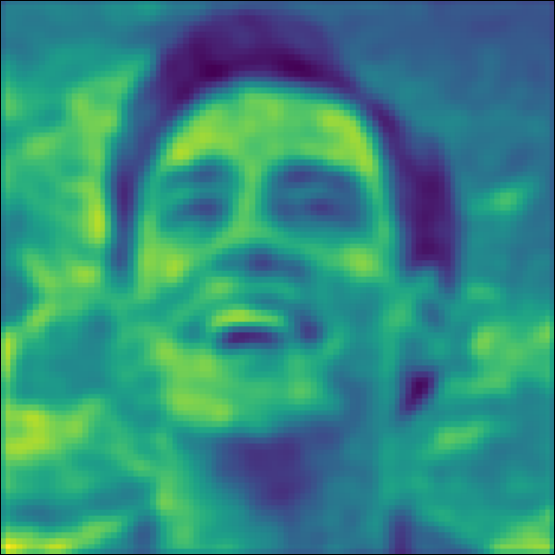}
    \end{minipage}
    \begin{minipage}{0.135\textwidth}
        \centering
        \includegraphics[width=\textwidth]{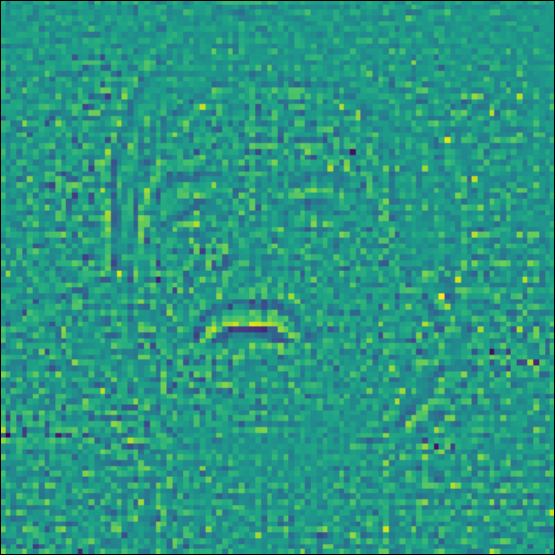}
    \end{minipage}
    \begin{minipage}{0.135\textwidth}
        \centering
        \includegraphics[width=\textwidth]{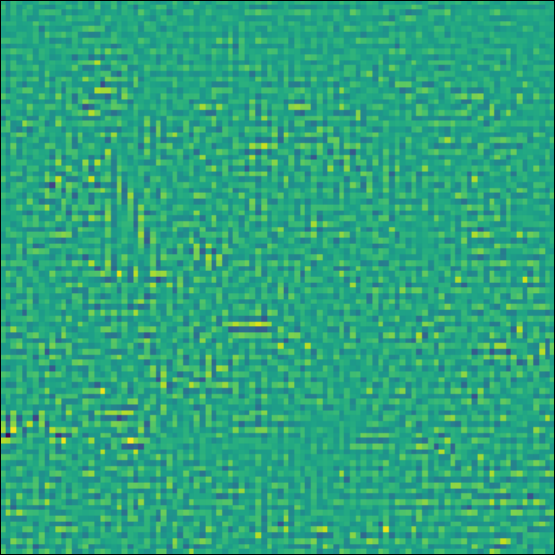}
    \end{minipage}
    \begin{minipage}{0.135\textwidth}
        \centering
        \includegraphics[width=\textwidth]{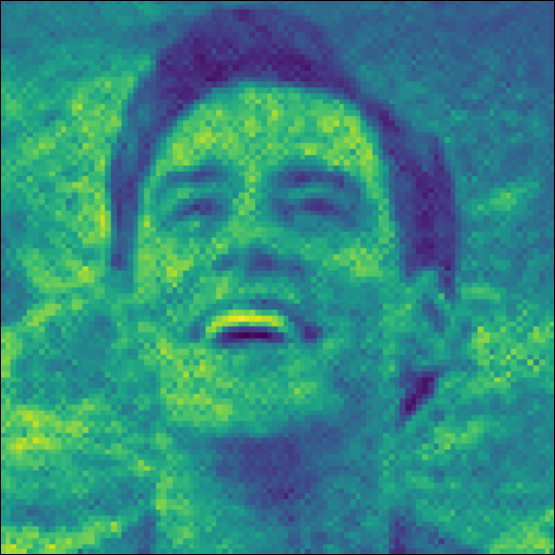}
    \end{minipage}
    \begin{minipage}{0.135\textwidth}
        \centering
        \includegraphics[width=\textwidth]{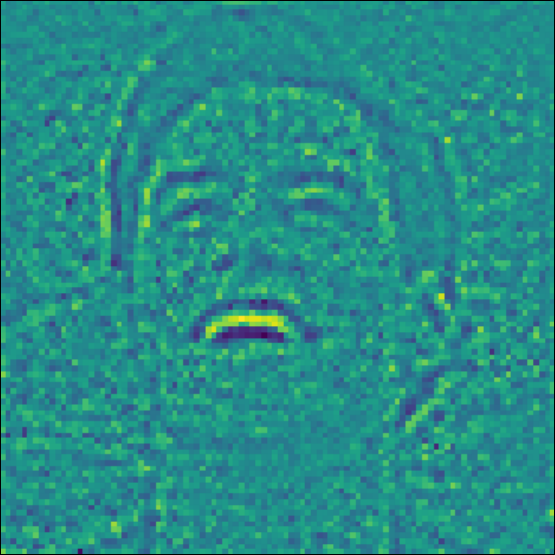}
    \end{minipage}
    \begin{minipage}{0.135\textwidth}
        \centering
        \includegraphics[width=\textwidth]{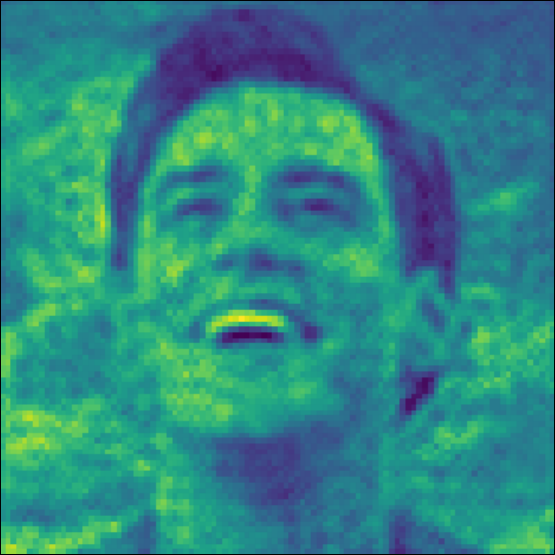}
    \end{minipage}
    \begin{minipage}{0.135\textwidth}
        \centering
        \includegraphics[width=\textwidth]{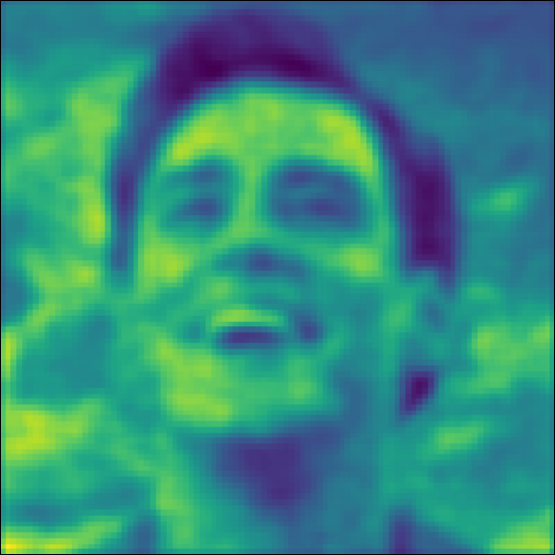}
    \end{minipage}%
    
    \begin{minipage}{0.135\textwidth}
        \centering
        \includegraphics[width=\textwidth]{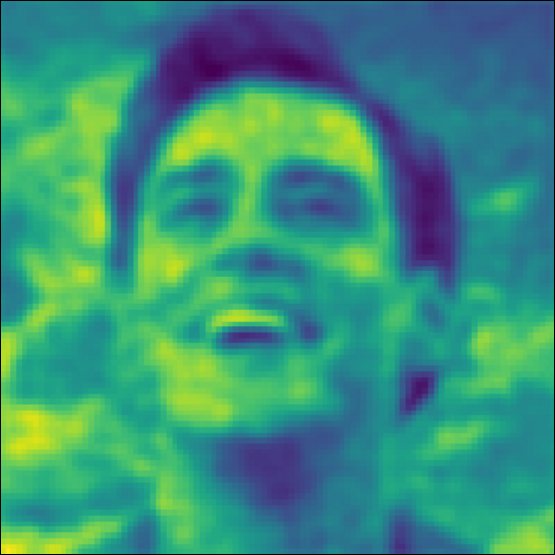}
    \end{minipage}
    \begin{minipage}{0.135\textwidth}
        \centering
        \includegraphics[width=\textwidth]{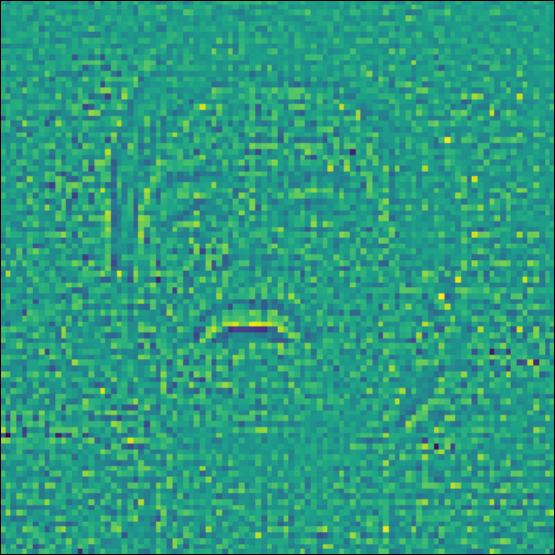}
    \end{minipage}
    \begin{minipage}{0.135\textwidth}
        \centering
        \includegraphics[width=\textwidth]{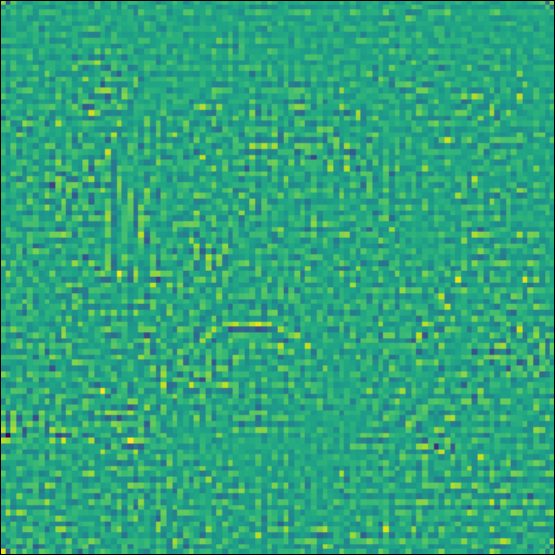}
    \end{minipage}
    \begin{minipage}{0.135\textwidth}
        \centering
        \includegraphics[width=\textwidth]{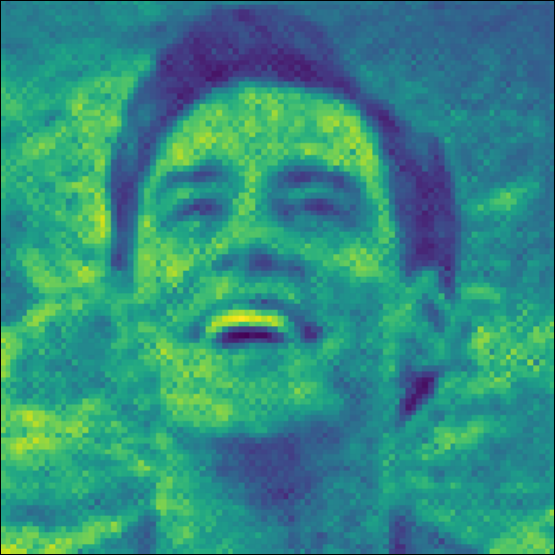}
    \end{minipage}
    \begin{minipage}{0.135\textwidth}
        \centering
        \includegraphics[width=\textwidth]{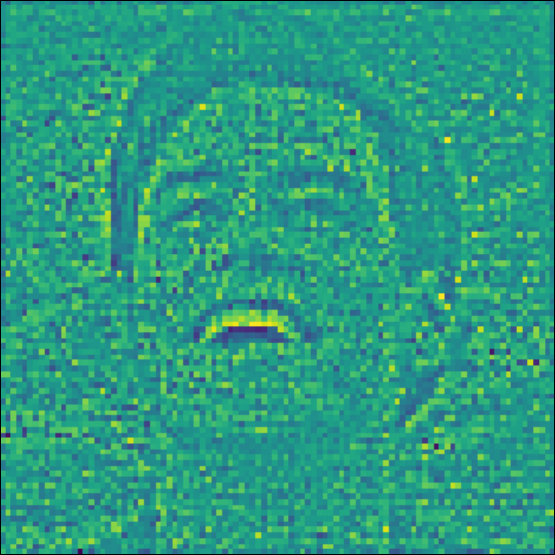}
    \end{minipage}
    \begin{minipage}{0.135\textwidth}
        \centering
        \includegraphics[width=\textwidth]{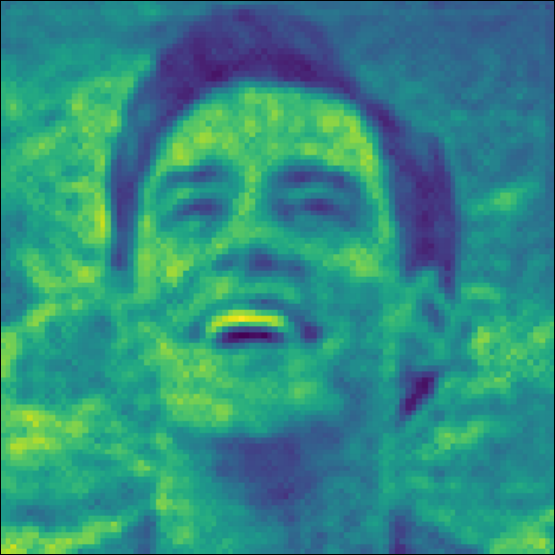}
    \end{minipage}
    \begin{minipage}{0.135\textwidth}
        \centering
        \includegraphics[width=\textwidth]{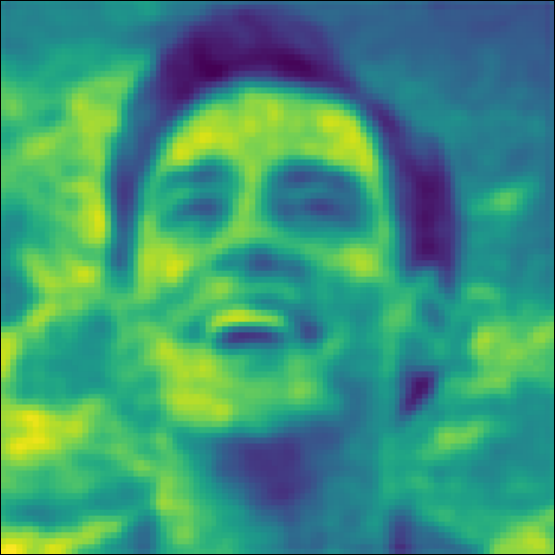}
    \end{minipage}%
    
    \begin{minipage}{0.135\textwidth}
        \centering
        \includegraphics[width=\textwidth]{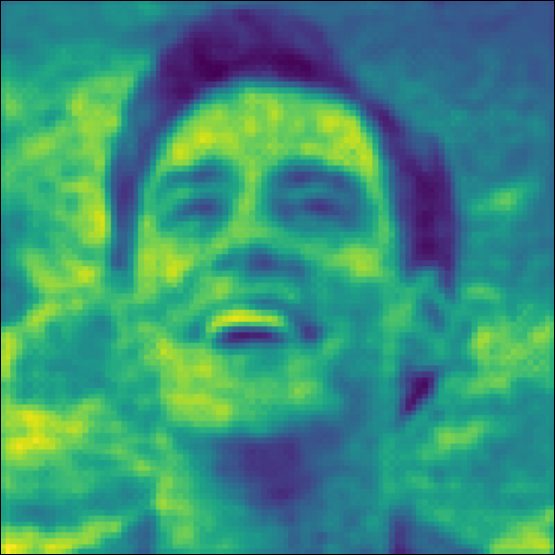}
            \small
            a) Low-pass
    \end{minipage}
    \begin{minipage}{0.135\textwidth}
        \centering
        \includegraphics[width=\textwidth]{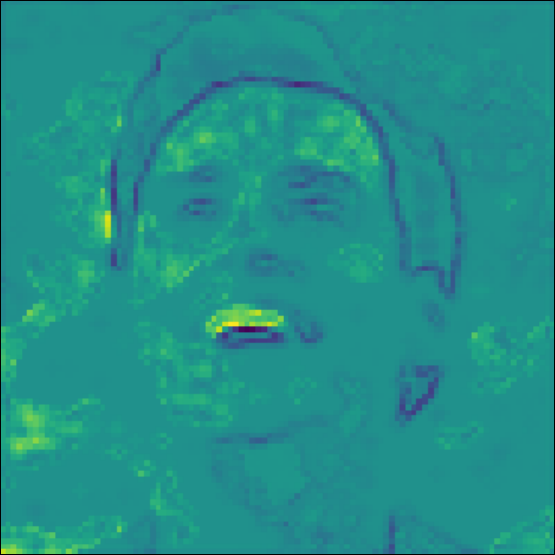}
            \small
            b) High-pass
    \end{minipage}
    \begin{minipage}{0.135\textwidth}
        \centering
        \includegraphics[width=\textwidth]{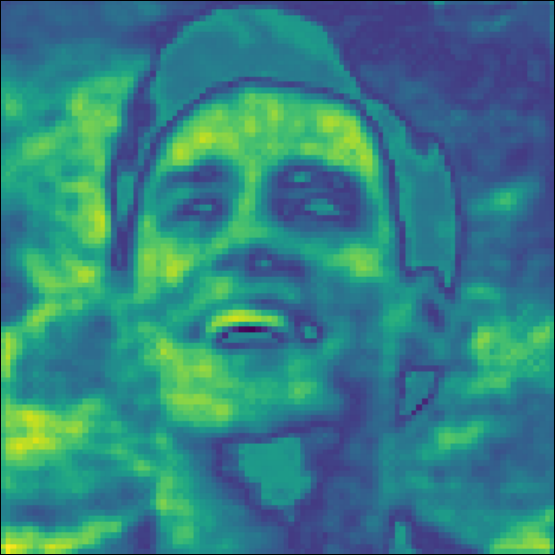}
            \small
            c) Band-pass
    \end{minipage}
    \begin{minipage}{0.135\textwidth}
        \centering
        \includegraphics[width=\textwidth]{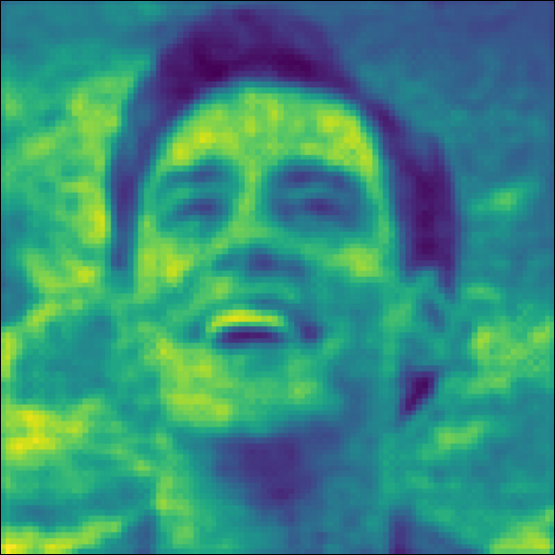}
            \small
            d) Band-rejection
    \end{minipage}
    \begin{minipage}{0.135\textwidth}
        \centering
        \includegraphics[width=\textwidth]{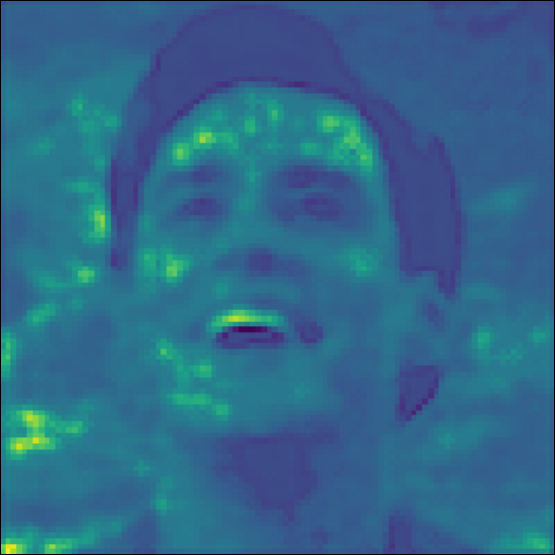}
            \small
            e) Comb
    \end{minipage}
    \begin{minipage}{0.135\textwidth}
        \centering
        \includegraphics[width=\textwidth]{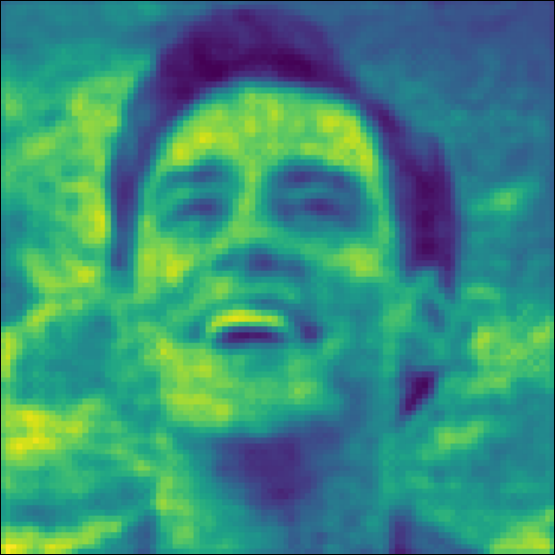}
            \small
            f) Low-band-pass
    \end{minipage}
    \begin{minipage}{0.135\textwidth}
        \centering
        \includegraphics[width=\textwidth]{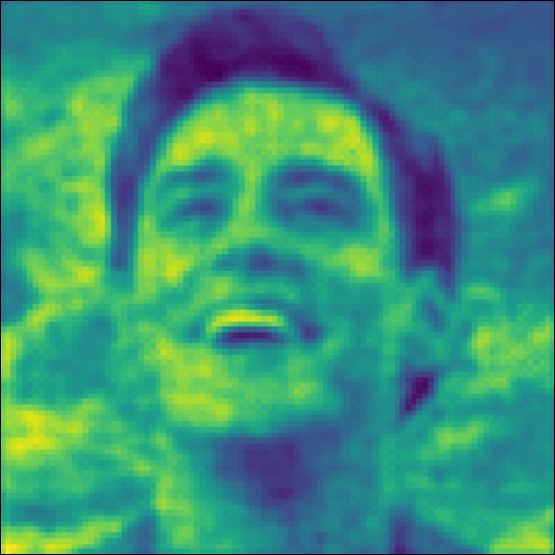}
            \small
            g) Runge
    \end{minipage}%
    \caption{The image signals processed by the real filter (first row), Hyperbolic-GCN (second row), and GCN (third row).}
    \label{fig:image}
\end{figure*}

\paragraph{Filter analysis.}
Figure~\ref{fig:filter} visualizes the filters learned by some methods. 
We can observe that all methods exhibit powerful low-pass characteristics on CiteSeer and Computers. 
Particularly, due to the adoption of a Laplacian with self-loop in GPR-GNN, the absence of scaling coefficients in the higher frequency range results in a certain degree of weight explosion in the lower frequency range. 
Hyperbolic-GPR effectively mitigates this issue. 
On Actor and Wisconsin, the methods show weaker high-pass characteristics. 
Only the curve of Hyperbolic-GPR displays a trend of being high on the left and low on the right, indicating its amplification of high-frequency signals and suppression of low-frequency signals. 
Furthermore, due to the issues in GPR-GNN, when dealing with heterophilic graphs, it can only suppress low-frequency signals by assigning negative scaling coefficients to them. 
Hyperbolic-GPR also circumvents this problem.

\subsection{Signal Filtering on Images}\label{sec:filter}

In this section, we investigate the fitting capability of polynomial spectral filters under the influence of Hyperbolic-PDE GNN.
We conduct experiment on 50 real images with a resolution of \(100 \times 100\) from the Image Processing Toolbox in Matlab. 
Here, we transform the images into a 2D regular 4-neighborhood grid graph following the experimental setup in~\cite{GNN_express, BernNet}, where each pixel denotes a node. 
Each image is associated with a \(100 \times 100\) adjacency matrix and a \(10000\)-dimensional signal vector \(\mathbf{x}\) that records pixel intensity.

For each image, we filter its original signals using 7 following spectral filters: low-pass filter, high-pass filter, band-pass filter, band-rejection filter, comb filter, low-band-pass filter, and runge filter. 
The formula and its curve of each filter can be found in Table~\ref{tb:error}.
For example, applying low-pass filter to an image is equivalent to performing operation \(\mathbf{y} = \mathbf{U} \mathrm{diag} \{e^{-10\lambda^2_1}, \dots, e^{-10\lambda^2_N}\} \mathbf{U}^\top \mathbf{x}\) on graph signal, where the filtered graph signal \(\mathbf{y}\) is used as the ground truth.

The objective of this experiment is to minimize the average of of squared error between the model output and ground truth across 50 images. 
Therefore, this experiment is considered as a regression task. 
We utilize the coefficient of determination (\(i.e., R^2\) score, closer to 1 is better) to reflect the degree of fit in regression.
We select GCN~\cite{GCN}, GAT~\cite{GAT}, ARMA~\cite{ARMA}, GPR-GNN~\cite{GPR-GNN}, ChebNet~\cite{ChebNet}, BernNet~\cite{BernNet}, JacobiConv~\cite{JacobiConv}, and ChebNetII~\cite{ChebNetII} as baselines and evaluate their fitting capabilities under the enhancement of Hyperbolic-PDE GNN.
To ensure fairness, we follow the setup of \citet{BernNet} to reproduce the results of each baseline. 
For our method, we conduct parameter search using wandb.
\textit{See appendix~\ref{sec:setup} for specific experimental configurations.}

As shown in Table~\ref{tb:error}, GCN and GAT exhibit good properties as low-pass filters, but struggle to fit more complex filters. 
On the other hand, polynomial spectral filters have the ability to fit complex filters. 
All the aforementioned base models demonstrate enhanced fitting capabilities under the augmentation of Hyperbolic-PDE GNN. 
Particularly notable is the significant improvement observed in GCN and GAT, indicating that Hyperbolic-PDE GNN not only enhances the expressive power of spectral filters but also endows them with the ability to fit complex filters.

\paragraph{Filter analysis.}
Figure~\ref{fig:image} illustrates graph signals obtained through real filters, Hyperbolic-GCN, and GCN. 
We can observe that the graph signals filtered by Hyperbolic-GCN closely resemble the original graph signals, while GCN struggles to extract useful features when fitting complex filters. 
Moreover, the most notable improvement is seen in ChebNet, whose inferior performance is attributed to learning illegal filter coefficients~\cite{ChebNetII}. 
Hyperbolic-Cheb can limit the solution space by defining initial values, thereby reducing the risk of learning illegal coefficients.

\section{Related Works}

\paragraph{Spectral Graph Neural Networks.}

Spectral CNN~\cite{SpectralCNN}, as a pioneering work in spectral graph neural networks, first apply spectral graph theory to the field of graphs. 
GCN~\cite{GCN}, as the most classic subsequent work, behaves like a low-pass filter. 
SGC~\cite{SGC} builds upon GCN by removing non-linear activation functions and simplifying the parameter matrices in each layer, demonstrating that linear GNNs also exhibit strong performance. 
APPNP~\cite{APPNP} proposes an improved propagation scheme based on Personalized PageRank. 
These early works focus on exploring features of homophilic graphs but show poor performance on heterophilic graphs. 
GPR-GNN~\cite{GPR-GNN} designs filter parameters based on PageRank to adaptively learn low-pass or high-pass filters. 
GNN-LF/HF~\cite{GNN-LF/HF} uses adjustable graph kernels to design low-pass and high-pass filters. 
ARMA~\cite{ARMA} employs auto-regressive moving average filters to design graph convolutions, which can better capture global graph features. 
While these methods can adapt to different types of graphs, their performance suffers when fitting complex filters due to their design of graph filters based on monomials.
ChebNet~\cite{ChebNet}, as an early attempt of polynomial filters, first approximates graph filters using Chebyshev polynomials as a basis. 
BernNet~\cite{BernNet} utilizes Bernstein polynomials as a basis to simulate complex filters such as band-pass and comb filters. 
JacobiConv~\cite{JacobiConv} unifies some methods based on orthogonal polynomials, like ChebNet, by employing more general Jacobi polynomials. 
ChebNetII~\cite{ChebNetII} addresses the issue of learning Illegal filter parameters in ChebNet by introducing Chebyshev interpolation to reduce approximation errors in Chebyshev polynomials. 
Most of these methods construct graph filters based on mature polynomial basis in mathematics, theoretically capable of approximating any filter function. 
Another class of methods designs more targeted filters to address various graph problems. 
G2CN~\cite{G2CN} introduces Gaussian graph filters to enable models to flexibly adapt to different types of graphs. 
EvenNet~\cite{EvenNet} proposes even-polynomial graph filters to enhance the model's robustness on heterophilic graphs. 
OptBasisGNN~\cite{OptBasisGNN} learn the optimal basis in the orthogonal polynomial space. 
UniFilter~\cite{UniFilter} specifically designs adaptive heterophily basis for heterophilic graphs. 
NFGNN~\cite{NFGNN} addresses the challenge of local patterns in graphs by proposing local spectral filters. 
AdaptKry~\cite{AdaptKry} optimizes graph filters using basis from adaptively Krylov subspaces. 
All these methods are defined in the original spatial domain, meaning the node features they obtain lack an interpretable solution space to explain their relationship with topological features, thus lacking necessary interpretability.

\paragraph{Differential Equation on Graph.}

In recent years, research has gradually delved into the dynamical systems of graph neural networks. 
Neural ODE~\cite{NeuralODE} initially models embeddings as a continuous dynamic model based on ordinary differential equations (ODE). 
GDE~\cite{GDE} employs forward Euler method to solve ODE on graphs. 
CGNN~\cite{CGNN} directly computes the analytical solution of ODE. 
GraphCON~\cite{GraphCON} establishes a connection with conventional graph convolutions based on second-order ODE. 
KuramotoGNN~\cite{KuramotoGNN} specifically utilizes the Kuramoto model to address over-smoothing issues. 
MGKN~\cite{MGKN} introduces partial differential equations (PDE) into graph problems. 
PDE-GCN~\cite{PDE-GCN} constructs graph convolutions using diffusion and hyperbolic equations, respectively. 
DGC~\cite{DGC} decouples terminal time from propagation steps. 
GDC~\cite{GDC} proposes graph diffusion convolution to extract information from indirectly connected neighbors. 
ADC~\cite{ADC} further introduces adaptively diffusion convolution to aggregate neighbors within the optimal neighborhood.
Subsequent models like GRAND~\cite{GRAND}, GRAND++~\cite{GRAND++}, and HiD-Net~\cite{HiD-Net} treat message passing as a heat diffusion process and model it based on the heat diffusion equation. 
Most of the aforementioned methods heuristically integrate differential equations into graph neural networks and demonstrate empirically their effectiveness. 
However, these methods do not systematically explore why models based on differential equations perform well, and do not delve into the intrinsic properties of node features obtained these methods.

\paragraph{Discussion with Hyperbolic Geometry GNNs}
The hyperbolic PDEs and hyperbolic geometry~\cite{HyperbolicGNN, HyperbolicGCN} exhibit fundamentally distinct characteristics.
Specifically, \textit{hyperbolic geometry} focuses on the hierarchical structure of graphs, which typically involves mapping nodes to hyperbolic space using models such as Poincaré Ball models or Lorentz models to extract hierarchical relationships between nodes.
On the other hand, our focus of \textit{hyperbolic PDEs} lies in constructing the message passing mechanism of GNNs based on the theory of PDE. 
This design naturally maps nodes into a solution space spanned by a set of eigenvectors of the graph, which can improve the ability to learn complex graph filters.

\section{Conclusion}

To resolve limitation of the spatial space of MP in representing node semantic features, this paper introduces a Hyperbolic-PDE GNN based on a system of hyperbolic PDEs. 
We theoretically demonstrate that the solution space is spanned by orthogonal bases that describe the topological structure of graphs, enabling to represent the topological features, thereby significantly increasing the interpretability of the model. 
Moreover, we enhance the flexibility and robustness of the model using polynomial spectral filters and establish a connection between hyperbolic PDEs and spectral GNNs.
Extensive experiments show that our approach exhibits effective and robust results with traditional GNNs.

\section*{Acknowledgments}

This work is supported by the National Natural Science Foundation of China (No.62406319), the Youth Innovation Promotion Association of CAS (Grant No. 2021153), and the Postdoctoral Fellowship Program of CPSF (No.GZC20232968).

\section*{Impact Statement}

The work presented in this paper aims to advance the field of graph deep learning.
We do not expect any immediate, negative societal impact of the present work. 
It delves into the theory of graph neural networks, and thus may benefit the applications that involve graphs by making them more effective and reliable.

\bibliography{example_paper}

\begin{thebibliography}{58}
\providecommand{\natexlab}[1]{#1}
\providecommand{\url}[1]{\texttt{#1}}
\expandafter\ifx\csname urlstyle\endcsname\relax
  \providecommand{\doi}[1]{doi: #1}\else
  \providecommand{\doi}{doi: \begingroup \urlstyle{rm}\Url}\fi

\bibitem[Balcilar et~al.(2021)Balcilar, Renton, H{\'{e}}roux, Ga{\"{u}}z{\`{e}}re, Adam, and Honeine]{GNN_express}
Balcilar, M., Renton, G., H{\'{e}}roux, P., Ga{\"{u}}z{\`{e}}re, B., Adam, S., and Honeine, P.
\newblock Analyzing the expressive power of graph neural networks in a spectral perspective.
\newblock In \emph{9th International Conference on Learning Representations, {ICLR} 2021, Virtual Event, Austria, May 3-7, 2021}. OpenReview.net, 2021.

\bibitem[Bianchi et~al.(2021)Bianchi, Grattarola, Livi, and Alippi]{ARMA}
Bianchi, F.~M., Grattarola, D., Livi, L., and Alippi, C.
\newblock Graph neural networks with convolutional {ARMA} filters.
\newblock \emph{{IEEE} Trans. Pattern Anal. Mach. Intell.}, 2021.
\newblock \doi{10.1109/TPAMI.2021.3054830}.

\bibitem[Bo et~al.(2023)Bo, Shi, Wang, and Liao]{SpecFormer}
Bo, D., Shi, C., Wang, L., and Liao, R.
\newblock Specformer: Spectral graph neural networks meet transformers.
\newblock In \emph{The Eleventh International Conference on Learning Representations, {ICLR} 2023, Kigali, Rwanda, May 1-5, 2023}. OpenReview.net, 2023.

\bibitem[Bruna et~al.(2014)Bruna, Zaremba, Szlam, and LeCun]{SpectralCNN}
Bruna, J., Zaremba, W., Szlam, A., and LeCun, Y.
\newblock Spectral networks and locally connected networks on graphs.
\newblock In Bengio, Y. and LeCun, Y. (eds.), \emph{2nd International Conference on Learning Representations, {ICLR} 2014, Banff, AB, Canada, April 14-16, 2014, Conference Track Proceedings}, 2014.

\bibitem[Chamberlain et~al.(2021)Chamberlain, Rowbottom, Gorinova, Bronstein, Webb, and Rossi]{GRAND}
Chamberlain, B., Rowbottom, J., Gorinova, M.~I., Bronstein, M.~M., Webb, S., and Rossi, E.
\newblock {GRAND:} graph neural diffusion.
\newblock In Meila, M. and Zhang, T. (eds.), \emph{Proceedings of the 38th International Conference on Machine Learning, {ICML} 2021, 18-24 July 2021, Virtual Event}, volume 139 of \emph{Proceedings of Machine Learning Research}, pp.\  1407--1418. {PMLR}, 2021.

\bibitem[Chami et~al.(2019)Chami, Ying, R{\'{e}}, and Leskovec]{HyperbolicGCN}
Chami, I., Ying, Z., R{\'{e}}, C., and Leskovec, J.
\newblock Hyperbolic graph convolutional neural networks.
\newblock In Wallach, H.~M., Larochelle, H., Beygelzimer, A., d'Alch{\'{e}}{-}Buc, F., Fox, E.~B., and Garnett, R. (eds.), \emph{Advances in Neural Information Processing Systems 32: Annual Conference on Neural Information Processing Systems 2019, NeurIPS 2019, December 8-14, 2019, Vancouver, BC, Canada}, pp.\  4869--4880, 2019.
\newblock URL \url{https://proceedings.neurips.cc/paper/2019/hash/0415740eaa4d9decbc8da001d3fd805f-Abstract.html}.

\bibitem[Chen et~al.(2018)Chen, Rubanova, Bettencourt, and Duvenaud]{NeuralODE}
Chen, T.~Q., Rubanova, Y., Bettencourt, J., and Duvenaud, D.
\newblock Neural ordinary differential equations.
\newblock In Bengio, S., Wallach, H.~M., Larochelle, H., Grauman, K., Cesa{-}Bianchi, N., and Garnett, R. (eds.), \emph{Advances in Neural Information Processing Systems 31: Annual Conference on Neural Information Processing Systems 2018, NeurIPS 2018, December 3-8, 2018, Montr{\'{e}}al, Canada}, pp.\  6572--6583, 2018.

\bibitem[Chien et~al.(2021)Chien, Peng, Li, and Milenkovic]{GPR-GNN}
Chien, E., Peng, J., Li, P., and Milenkovic, O.
\newblock Adaptive universal generalized pagerank graph neural network.
\newblock In \emph{9th International Conference on Learning Representations, {ICLR} 2021, Virtual Event, Austria, May 3-7, 2021}. OpenReview.net, 2021.

\bibitem[Defferrard et~al.(2016)Defferrard, Bresson, and Vandergheynst]{ChebNet}
Defferrard, M., Bresson, X., and Vandergheynst, P.
\newblock Convolutional neural networks on graphs with fast localized spectral filtering.
\newblock In Lee, D.~D., Sugiyama, M., von Luxburg, U., Guyon, I., and Garnett, R. (eds.), \emph{Advances in Neural Information Processing Systems 29: Annual Conference on Neural Information Processing Systems 2016, December 5-10, 2016, Barcelona, Spain}, pp.\  3837--3845, 2016.

\bibitem[Eliasof et~al.(2021)Eliasof, Haber, and Treister]{PDE-GCN}
Eliasof, M., Haber, E., and Treister, E.
\newblock {PDE-GCN:} novel architectures for graph neural networks motivated by partial differential equations.
\newblock In Ranzato, M., Beygelzimer, A., Dauphin, Y.~N., Liang, P., and Vaughan, J.~W. (eds.), \emph{Advances in Neural Information Processing Systems 34: Annual Conference on Neural Information Processing Systems 2021, NeurIPS 2021, December 6-14, 2021, virtual}, pp.\  3836--3849, 2021.

\bibitem[Fey \& Lenssen(2019)Fey and Lenssen]{PyG}
Fey, M. and Lenssen, J.~E.
\newblock Fast graph representation learning with {PyTorch Geometric}.
\newblock In \emph{ICLR Workshop on Representation Learning on Graphs and Manifolds}, 2019.

\bibitem[Gao et~al.(2023)Gao, Wang, He, Liu, Feng, and Zhang]{anomaly2}
Gao, Y., Wang, X., He, X., Liu, Z., Feng, H., and Zhang, Y.
\newblock Addressing heterophily in graph anomaly detection: {A} perspective of graph spectrum.
\newblock In Ding, Y., Tang, J., Sequeda, J.~F., Aroyo, L., Castillo, C., and Houben, G. (eds.), \emph{Proceedings of the {ACM} Web Conference 2023, {WWW} 2023, Austin, TX, USA, 30 April 2023 - 4 May 2023}, pp.\  1528--1538. {ACM}, 2023.
\newblock \doi{10.1145/3543507.3583268}.

\bibitem[Gaudelet et~al.(2021)Gaudelet, Day, Jamasb, Soman, Regep, Liu, Hayter, Vickers, Roberts, Tang, Roblin, Blundell, Bronstein, and Taylor{-}King]{drug}
Gaudelet, T., Day, B., Jamasb, A.~R., Soman, J., Regep, C., Liu, G., Hayter, J. B.~R., Vickers, R., Roberts, C., Tang, J., Roblin, D., Blundell, T.~L., Bronstein, M.~M., and Taylor{-}King, J.~P.
\newblock Utilizing graph machine learning within drug discovery and development.
\newblock \emph{Briefings Bioinform.}, 22\penalty0 (6), 2021.
\newblock \doi{10.1093/BIB/BBAB159}.

\bibitem[Geng et~al.(2023)Geng, Chen, He, Zeng, Han, Chai, and Yan]{PyGNN}
Geng, H., Chen, C., He, Y., Zeng, G., Han, Z., Chai, H., and Yan, J.
\newblock Pyramid graph neural network: {A} graph sampling and filtering approach for multi-scale disentangled representations.
\newblock In Singh, A.~K., Sun, Y., Akoglu, L., Gunopulos, D., Yan, X., Kumar, R., Ozcan, F., and Ye, J. (eds.), \emph{Proceedings of the 29th {ACM} {SIGKDD} Conference on Knowledge Discovery and Data Mining, {KDD} 2023, Long Beach, CA, USA, August 6-10, 2023}, pp.\  518--530. {ACM}, 2023.
\newblock \doi{10.1145/3580305.3599478}.

\bibitem[Gilmer et~al.(2017)Gilmer, Schoenholz, Riley, Vinyals, and Dahl]{message}
Gilmer, J., Schoenholz, S.~S., Riley, P.~F., Vinyals, O., and Dahl, G.~E.
\newblock Neural message passing for quantum chemistry.
\newblock In Precup, D. and Teh, Y.~W. (eds.), \emph{Proceedings of the 34th International Conference on Machine Learning, {ICML} 2017, Sydney, NSW, Australia, 6-11 August 2017}, volume~70 of \emph{Proceedings of Machine Learning Research}, pp.\  1263--1272. {PMLR}, 2017.

\bibitem[Guo \& Wei(2023)Guo and Wei]{OptBasisGNN}
Guo, Y. and Wei, Z.
\newblock Graph neural networks with learnable and optimal polynomial bases.
\newblock In Krause, A., Brunskill, E., Cho, K., Engelhardt, B., Sabato, S., and Scarlett, J. (eds.), \emph{International Conference on Machine Learning, {ICML} 2023, 23-29 July 2023, Honolulu, Hawaii, {USA}}, volume 202 of \emph{Proceedings of Machine Learning Research}, pp.\  12077--12097. {PMLR}, 2023.

\bibitem[He et~al.(2021)He, Wei, Huang, and Xu]{BernNet}
He, M., Wei, Z., Huang, Z., and Xu, H.
\newblock Bernnet: Learning arbitrary graph spectral filters via bernstein approximation.
\newblock In Ranzato, M., Beygelzimer, A., Dauphin, Y.~N., Liang, P., and Vaughan, J.~W. (eds.), \emph{Advances in Neural Information Processing Systems 34: Annual Conference on Neural Information Processing Systems 2021, NeurIPS 2021, December 6-14, 2021, virtual}, pp.\  14239--14251, 2021.

\bibitem[He et~al.(2022)He, Wei, and Wen]{ChebNetII}
He, M., Wei, Z., and Wen, J.
\newblock Convolutional neural networks on graphs with chebyshev approximation, revisited.
\newblock In Koyejo, S., Mohamed, S., Agarwal, A., Belgrave, D., Cho, K., and Oh, A. (eds.), \emph{Advances in Neural Information Processing Systems 35: Annual Conference on Neural Information Processing Systems 2022, NeurIPS 2022, New Orleans, LA, USA, November 28 - December 9, 2022}, 2022.

\bibitem[Huang et~al.(2024{\natexlab{a}})Huang, Cao, Ta, Xiao, and Li{\`{o}}]{AdaptKry}
Huang, K., Cao, W., Ta, H., Xiao, X., and Li{\`{o}}, P.
\newblock Optimizing polynomial graph filters: {A} novel adaptive krylov subspace approach.
\newblock In Chua, T., Ngo, C., Kumar, R., Lauw, H.~W., and Lee, R.~K. (eds.), \emph{Proceedings of the {ACM} on Web Conference 2024, {WWW} 2024, Singapore, May 13-17, 2024}, pp.\  1057--1068. {ACM}, 2024{\natexlab{a}}.
\newblock \doi{10.1145/3589334.3645705}.

\bibitem[Huang et~al.(2024{\natexlab{b}})Huang, Wang, Li, and Lio]{UniFilter}
Huang, K., Wang, Y.~G., Li, M., and Lio, P.
\newblock How universal polynomial bases enhance spectral graph neural networks: Heterophily, over-smoothing, and over-squashing.
\newblock In \emph{Forty-first International Conference on Machine Learning, {ICML} 2024, Vienna, Austria, July 21-27, 2024}. OpenReview.net, 2024{\natexlab{b}}.

\bibitem[Kingma \& Ba(2015)Kingma and Ba]{Adam}
Kingma, D.~P. and Ba, J.
\newblock Adam: {A} method for stochastic optimization.
\newblock In \emph{3rd International Conference on Learning Representations, {ICLR} 2015, San Diego, CA, USA, May 7-9, 2015, Conference Track Proceedings}, 2015.

\bibitem[Kipf \& Welling(2017)Kipf and Welling]{GCN}
Kipf, T.~N. and Welling, M.
\newblock Semi-supervised classification with graph convolutional networks.
\newblock In \emph{5th International Conference on Learning Representations, {ICLR} 2017, Toulon, France, April 24-26, 2017, Conference Track Proceedings}. OpenReview.net, 2017.

\bibitem[Klicpera et~al.(2019{\natexlab{a}})Klicpera, Bojchevski, and G{\"{u}}nnemann]{APPNP}
Klicpera, J., Bojchevski, A., and G{\"{u}}nnemann, S.
\newblock Predict then propagate: Graph neural networks meet personalized pagerank.
\newblock In \emph{7th International Conference on Learning Representations, {ICLR} 2019, New Orleans, LA, USA, May 6-9, 2019}. OpenReview.net, 2019{\natexlab{a}}.

\bibitem[Klicpera et~al.(2019{\natexlab{b}})Klicpera, Wei{\ss}enberger, and G{\"{u}}nnemann]{GDC}
Klicpera, J., Wei{\ss}enberger, S., and G{\"{u}}nnemann, S.
\newblock Diffusion improves graph learning.
\newblock In Wallach, H.~M., Larochelle, H., Beygelzimer, A., d'Alch{\'{e}}{-}Buc, F., Fox, E.~B., and Garnett, R. (eds.), \emph{Advances in Neural Information Processing Systems 32: Annual Conference on Neural Information Processing Systems 2019, NeurIPS 2019, December 8-14, 2019, Vancouver, BC, Canada}, pp.\  13333--13345, 2019{\natexlab{b}}.

\bibitem[Lei et~al.(2022)Lei, Wang, Li, Ding, and Wei]{EvenNet}
Lei, R., Wang, Z., Li, Y., Ding, B., and Wei, Z.
\newblock Evennet: Ignoring odd-hop neighbors improves robustness of graph neural networks.
\newblock In Koyejo, S., Mohamed, S., Agarwal, A., Belgrave, D., Cho, K., and Oh, A. (eds.), \emph{Advances in Neural Information Processing Systems 35: Annual Conference on Neural Information Processing Systems 2022, NeurIPS 2022, New Orleans, LA, USA, November 28 - December 9, 2022}, 2022.

\bibitem[Li et~al.(2022)Li, Guo, Wang, Wang, and Lin]{G2CN}
Li, M., Guo, X., Wang, Y., Wang, Y., and Lin, Z.
\newblock G\({}^{\mbox{2}}\)cn: Graph gaussian convolution networks with concentrated graph filters.
\newblock In Chaudhuri, K., Jegelka, S., Song, L., Szepesv{\'{a}}ri, C., Niu, G., and Sabato, S. (eds.), \emph{International Conference on Machine Learning, {ICML} 2022, 17-23 July 2022, Baltimore, Maryland, {USA}}, volume 162 of \emph{Proceedings of Machine Learning Research}, pp.\  12782--12796. {PMLR}, 2022.

\bibitem[Li et~al.(2024)Li, Wang, Liu, and Shi]{HiD-Net}
Li, Y., Wang, X., Liu, H., and Shi, C.
\newblock A generalized neural diffusion framework on graphs.
\newblock In Wooldridge, M.~J., Dy, J.~G., and Natarajan, S. (eds.), \emph{Thirty-Eighth {AAAI} Conference on Artificial Intelligence, {AAAI} 2024, Thirty-Sixth Conference on Innovative Applications of Artificial Intelligence, {IAAI} 2024, Fourteenth Symposium on Educational Advances in Artificial Intelligence, {EAAI} 2014, February 20-27, 2024, Vancouver, Canada}, pp.\  8707--8715. {AAAI} Press, 2024.
\newblock \doi{10.1609/AAAI.V38I8.28716}.

\bibitem[Li et~al.(2020)Li, Kovachki, Azizzadenesheli, Liu, Stuart, Bhattacharya, and Anandkumar]{MGKN}
Li, Z., Kovachki, N.~B., Azizzadenesheli, K., Liu, B., Stuart, A.~M., Bhattacharya, K., and Anandkumar, A.
\newblock Multipole graph neural operator for parametric partial differential equations.
\newblock In Larochelle, H., Ranzato, M., Hadsell, R., Balcan, M., and Lin, H. (eds.), \emph{Advances in Neural Information Processing Systems 33: Annual Conference on Neural Information Processing Systems 2020, NeurIPS 2020, December 6-12, 2020, virtual}, 2020.

\bibitem[Liao et~al.(2025)Liao, Zhu, Li, Zhang, Ou, and Li]{recommend2}
Liao, W., Zhu, Y., Li, Y., Zhang, Q., Ou, Z., and Li, X.
\newblock Revgnn: Negative sampling enhanced contrastive graph learning for academic reviewer recommendation.
\newblock \emph{{ACM} Trans. Inf. Syst.}, 43\penalty0 (1):\penalty0 1:1--1:26, 2025.
\newblock \doi{10.1145/3679200}.
\newblock URL \url{https://doi.org/10.1145/3679200}.

\bibitem[Lin et~al.(2021)Lin, Liu, Dai, Zhang, Li, Tang, He, Hao, and Yu]{social1}
Lin, J., Liu, W., Dai, X., Zhang, W., Li, S., Tang, R., He, X., Hao, J., and Yu, Y.
\newblock A graph-enhanced click model for web search.
\newblock In Diaz, F., Shah, C., Suel, T., Castells, P., Jones, R., and Sakai, T. (eds.), \emph{{SIGIR} '21: The 44th International {ACM} {SIGIR} Conference on Research and Development in Information Retrieval, Virtual Event, Canada, July 11-15, 2021}, pp.\  1259--1268. {ACM}, 2021.
\newblock \doi{10.1145/3404835.3462895}.
\newblock URL \url{https://doi.org/10.1145/3404835.3462895}.

\bibitem[Liu et~al.(2019)Liu, Nickel, and Kiela]{HyperbolicGNN}
Liu, Q., Nickel, M., and Kiela, D.
\newblock Hyperbolic graph neural networks.
\newblock In Wallach, H.~M., Larochelle, H., Beygelzimer, A., d'Alch{\'{e}}{-}Buc, F., Fox, E.~B., and Garnett, R. (eds.), \emph{Advances in Neural Information Processing Systems 32: Annual Conference on Neural Information Processing Systems 2019, NeurIPS 2019, December 8-14, 2019, Vancouver, BC, Canada}, pp.\  8228--8239, 2019.
\newblock URL \url{https://proceedings.neurips.cc/paper/2019/hash/103303dd56a731e377d01f6a37badae3-Abstract.html}.

\bibitem[Ma et~al.(2023)Ma, Wu, Xue, Yang, Zhou, Sheng, Xiong, and Akoglu]{anomaly1}
Ma, X., Wu, J., Xue, S., Yang, J., Zhou, C., Sheng, Q.~Z., Xiong, H., and Akoglu, L.
\newblock A comprehensive survey on graph anomaly detection with deep learning.
\newblock \emph{{IEEE} Trans. Knowl. Data Eng.}, 35\penalty0 (12):\penalty0 12012--12038, 2023.
\newblock \doi{10.1109/TKDE.2021.3118815}.
\newblock URL \url{https://doi.org/10.1109/TKDE.2021.3118815}.

\bibitem[Nguyen et~al.(2024)Nguyen, Honda, Sano, Nguyen, Nakamura, and Nguyen]{KuramotoGNN}
Nguyen, T., Honda, H., Sano, T., Nguyen, V., Nakamura, S., and Nguyen, T.~M.
\newblock From coupled oscillators to graph neural networks: Reducing over-smoothing via a kuramoto model-based approach.
\newblock In Dasgupta, S., Mandt, S., and Li, Y. (eds.), \emph{International Conference on Artificial Intelligence and Statistics, 2-4 May 2024, Palau de Congressos, Valencia, Spain}, volume 238 of \emph{Proceedings of Machine Learning Research}, pp.\  2710--2718. {PMLR}, 2024.

\bibitem[Page et~al.(1999)Page, Brin, Motwani, and Winograd]{PageRank}
Page, L., Brin, S., Motwani, R., and Winograd, T.
\newblock The pagerank citation ranking: Bringing order to the web.
\newblock Technical report, Stanford infolab, 1999.

\bibitem[Pei et~al.(2020)Pei, Wei, Chang, Lei, and Yang]{hete1}
Pei, H., Wei, B., Chang, K.~C., Lei, Y., and Yang, B.
\newblock Geom-gcn: Geometric graph convolutional networks.
\newblock In \emph{8th International Conference on Learning Representations, {ICLR} 2020, Addis Ababa, Ethiopia, April 26-30, 2020}. OpenReview.net, 2020.

\bibitem[Platonov et~al.(2023)Platonov, Kuznedelev, Diskin, Babenko, and Prokhorenkova]{no_wiki}
Platonov, O., Kuznedelev, D., Diskin, M., Babenko, A., and Prokhorenkova, L.
\newblock A critical look at the evaluation of gnns under heterophily: Are we really making progress?
\newblock In \emph{The Eleventh International Conference on Learning Representations, {ICLR} 2023, Kigali, Rwanda, May 1-5, 2023}. OpenReview.net, 2023.

\bibitem[Poli et~al.(2019)Poli, Massaroli, Park, Yamashita, Asama, and Park]{GDE}
Poli, M., Massaroli, S., Park, J., Yamashita, A., Asama, H., and Park, J.
\newblock Graph neural ordinary differential equations.
\newblock \emph{CoRR}, abs/1911.07532, 2019.

\bibitem[Rozemberczki \& Sarkar(2020)Rozemberczki and Sarkar]{hete2}
Rozemberczki, B. and Sarkar, R.
\newblock Characteristic functions on graphs: Birds of a feather, from statistical descriptors to parametric models.
\newblock In \emph{{CIKM} '20: The 29th {ACM} International Conference on Information and Knowledge Management, Virtual Event, Ireland, October 19-23, 2020}. {ACM}, 2020.
\newblock \doi{10.1145/3340531.3411866}.

\bibitem[Rozemberczki et~al.(2019)Rozemberczki, Allen, and Sarkar]{hete3}
Rozemberczki, B., Allen, C., and Sarkar, R.
\newblock Multi-scale attributed node embedding.
\newblock \emph{CoRR}, abs/1909.13021, 2019.

\bibitem[Rusch et~al.(2022)Rusch, Chamberlain, Rowbottom, Mishra, and Bronstein]{GraphCON}
Rusch, T.~K., Chamberlain, B., Rowbottom, J., Mishra, S., and Bronstein, M.~M.
\newblock Graph-coupled oscillator networks.
\newblock In Chaudhuri, K., Jegelka, S., Song, L., Szepesv{\'{a}}ri, C., Niu, G., and Sabato, S. (eds.), \emph{International Conference on Machine Learning, {ICML} 2022, 17-23 July 2022, Baltimore, Maryland, {USA}}, volume 162 of \emph{Proceedings of Machine Learning Research}, pp.\  18888--18909. {PMLR}, 2022.

\bibitem[Shchur et~al.(2018)Shchur, Mumme, Bojchevski, and G{\"{u}}nnemann]{homo1}
Shchur, O., Mumme, M., Bojchevski, A., and G{\"{u}}nnemann, S.
\newblock Pitfalls of graph neural network evaluation.
\newblock \emph{CoRR}, abs/1811.05868, 2018.

\bibitem[Shlomi et~al.(2021)Shlomi, Battaglia, and Vlimant]{physics}
Shlomi, J., Battaglia, P.~W., and Vlimant, J.
\newblock Graph neural networks in particle physics.
\newblock \emph{Mach. Learn. Sci. Technol.}, 2\penalty0 (2):\penalty0 21001, 2021.
\newblock \doi{10.1088/2632-2153/ABBF9A}.

\bibitem[Shuman et~al.(2013)Shuman, Narang, Frossard, Ortega, and Vandergheynst]{GSP}
Shuman, D.~I., Narang, S.~K., Frossard, P., Ortega, A., and Vandergheynst, P.
\newblock The emerging field of signal processing on graphs: Extending high-dimensional data analysis to networks and other irregular domains.
\newblock \emph{{IEEE} Signal Process. Mag.}, 30\penalty0 (3):\penalty0 83--98, 2013.
\newblock \doi{10.1109/MSP.2012.2235192}.

\bibitem[Stone(1937)]{approx}
Stone, M.~H.
\newblock Applications of the theory of boolean rings to general topology.
\newblock \emph{Transactions of the American Mathematical Society}, 41\penalty0 (3):\penalty0 375--481, 1937.

\bibitem[Thorpe et~al.(2022)Thorpe, Nguyen, Xia, Strohmer, Bertozzi, Osher, and Wang]{GRAND++}
Thorpe, M., Nguyen, T.~M., Xia, H., Strohmer, T., Bertozzi, A.~L., Osher, S.~J., and Wang, B.
\newblock {GRAND++:} graph neural diffusion with {A} source term.
\newblock In \emph{The Tenth International Conference on Learning Representations, {ICLR} 2022, Virtual Event, April 25-29, 2022}. OpenReview.net, 2022.

\bibitem[Velickovic et~al.(2018)Velickovic, Cucurull, Casanova, Romero, Li{\`{o}}, and Bengio]{GAT}
Velickovic, P., Cucurull, G., Casanova, A., Romero, A., Li{\`{o}}, P., and Bengio, Y.
\newblock Graph attention networks.
\newblock In \emph{6th International Conference on Learning Representations, {ICLR} 2018, Vancouver, BC, Canada, April 30 - May 3, 2018, Conference Track Proceedings}. OpenReview.net, 2018.

\bibitem[Wang \& Zhang(2022)Wang and Zhang]{JacobiConv}
Wang, X. and Zhang, M.
\newblock How powerful are spectral graph neural networks.
\newblock In Chaudhuri, K., Jegelka, S., Song, L., Szepesv{\'{a}}ri, C., Niu, G., and Sabato, S. (eds.), \emph{International Conference on Machine Learning, {ICML} 2022, 17-23 July 2022, Baltimore, Maryland, {USA}}, volume 162 of \emph{Proceedings of Machine Learning Research}, pp.\  23341--23362. {PMLR}, 2022.

\bibitem[Wang et~al.(2021)Wang, Wang, Yang, and Lin]{DGC}
Wang, Y., Wang, Y., Yang, J., and Lin, Z.
\newblock Dissecting the diffusion process in linear graph convolutional networks.
\newblock In Ranzato, M., Beygelzimer, A., Dauphin, Y.~N., Liang, P., and Vaughan, J.~W. (eds.), \emph{Advances in Neural Information Processing Systems 34: Annual Conference on Neural Information Processing Systems 2021, NeurIPS 2021, December 6-14, 2021, virtual}, pp.\  5758--5769, 2021.

\bibitem[Wu et~al.(2019)Wu, Jr., Zhang, Fifty, Yu, and Weinberger]{SGC}
Wu, F., Jr., A. H.~S., Zhang, T., Fifty, C., Yu, T., and Weinberger, K.~Q.
\newblock Simplifying graph convolutional networks.
\newblock In \emph{Proceedings of the 36th International Conference on Machine Learning, {ICML} 2019, 9-15 June 2019, Long Beach, California, {USA}}. {PMLR}, 2019.

\bibitem[Wu et~al.(2023)Wu, Sun, Zhang, Xie, and Cui]{recommend1}
Wu, S., Sun, F., Zhang, W., Xie, X., and Cui, B.
\newblock Graph neural networks in recommender systems: {A} survey.
\newblock \emph{{ACM} Comput. Surv.}, 55\penalty0 (5):\penalty0 97:1--97:37, 2023.
\newblock \doi{10.1145/3535101}.

\bibitem[Wu et~al.(2021)Wu, Pan, Chen, Long, Zhang, and Yu]{gnn}
Wu, Z., Pan, S., Chen, F., Long, G., Zhang, C., and Yu, P.~S.
\newblock A comprehensive survey on graph neural networks.
\newblock \emph{{IEEE} Trans. Neural Networks Learn. Syst.}, 32\penalty0 (1):\penalty0 4--24, 2021.
\newblock \doi{10.1109/TNNLS.2020.2978386}.

\bibitem[Xhonneux et~al.(2020)Xhonneux, Qu, and Tang]{CGNN}
Xhonneux, L. A.~C., Qu, M., and Tang, J.
\newblock Continuous graph neural networks.
\newblock In \emph{Proceedings of the 37th International Conference on Machine Learning, {ICML} 2020, 13-18 July 2020, Virtual Event}, volume 119 of \emph{Proceedings of Machine Learning Research}, pp.\  10432--10441. {PMLR}, 2020.

\bibitem[Yang et~al.(2016)Yang, Cohen, and Salakhutdinov]{homo2}
Yang, Z., Cohen, W.~W., and Salakhutdinov, R.
\newblock Revisiting semi-supervised learning with graph embeddings.
\newblock In \emph{Proceedings of the 33nd International Conference on Machine Learning, {ICML} 2016, New York City, NY, USA, June 19-24, 2016}. JMLR.org, 2016.

\bibitem[Yue et~al.(2025)Yue, Li, Sheng, Guo, Zhang, Zhou, Liu, and Guo]{GWN}
Yue, J., Li, H., Sheng, J., Guo, Y., Zhang, X., Zhou, C., Liu, T., and Guo, L.
\newblock Graph wave networks.
\newblock In Long, G., Blumestein, M., Chang, Y., Lewin{-}Eytan, L., Huang, Z.~H., and Yom{-}Tov, E. (eds.), \emph{Proceedings of the {ACM} on Web Conference 2025, {WWW} 2025, Sydney, NSW, Australia, 28 April 2025- 2 May 2025}, pp.\  1365--1379. {ACM}, 2025.
\newblock \doi{10.1145/3696410.3714673}.
\newblock URL \url{https://doi.org/10.1145/3696410.3714673}.

\bibitem[Zhao et~al.(2021)Zhao, Dong, Ding, Kharlamov, and Tang]{ADC}
Zhao, J., Dong, Y., Ding, M., Kharlamov, E., and Tang, J.
\newblock Adaptive diffusion in graph neural networks.
\newblock In Ranzato, M., Beygelzimer, A., Dauphin, Y.~N., Liang, P., and Vaughan, J.~W. (eds.), \emph{Advances in Neural Information Processing Systems 34: Annual Conference on Neural Information Processing Systems 2021, NeurIPS 2021, December 6-14, 2021, virtual}, pp.\  23321--23333, 2021.

\bibitem[Zheng et~al.(2024)Zheng, Zhu, Liu, Li, and Zhao]{NFGNN}
Zheng, S., Zhu, Z., Liu, Z., Li, Y., and Zhao, Y.
\newblock Node-oriented spectral filtering for graph neural networks.
\newblock \emph{{IEEE} Trans. Pattern Anal. Mach. Intell.}, 46\penalty0 (1):\penalty0 388--402, 2024.
\newblock \doi{10.1109/TPAMI.2023.3324937}.

\bibitem[Zhu et~al.(2021)Zhu, Wang, Shi, Ji, and Cui]{GNN-LF/HF}
Zhu, M., Wang, X., Shi, C., Ji, H., and Cui, P.
\newblock Interpreting and unifying graph neural networks with an optimization framework.
\newblock In Leskovec, J., Grobelnik, M., Najork, M., Tang, J., and Zia, L. (eds.), \emph{{WWW} '21: The Web Conference 2021, Virtual Event / Ljubljana, Slovenia, April 19-23, 2021}, pp.\  1215--1226. {ACM} / {IW3C2}, 2021.
\newblock \doi{10.1145/3442381.3449953}.

\bibitem[Zhu et~al.(2023)Zhu, Cong, Zhang, Gong, Lin, Feng, Dong, and Tang]{social2}
Zhu, Y., Cong, F., Zhang, D., Gong, W., Lin, Q., Feng, W., Dong, Y., and Tang, J.
\newblock Wingnn: Dynamic graph neural networks with random gradient aggregation window.
\newblock In Singh, A.~K., Sun, Y., Akoglu, L., Gunopulos, D., Yan, X., Kumar, R., Ozcan, F., and Ye, J. (eds.), \emph{Proceedings of the 29th {ACM} {SIGKDD} Conference on Knowledge Discovery and Data Mining, {KDD} 2023, Long Beach, CA, USA, August 6-10, 2023}, pp.\  3650--3662. {ACM}, 2023.
\newblock \doi{10.1145/3580305.3599551}.
\newblock URL \url{https://doi.org/10.1145/3580305.3599551}.

\end{thebibliography}
\bibliographystyle{icml2025}

\newpage
\appendix
\onecolumn

\section{Theoretical Analysis}

\subsection{Detailed Derivation of the System of Partial Differential Equations}\label{sec:hpde_system}

Given Equation~(\ref{eq:hpde_node}), we combine the equations of all nodes on any dimension \(l\) to obtain a system of second-order linear partial differential equations:
\begin{equation}\label{eq:node_2sys}
    \left\{
    \begin{aligned}
        &\frac{\partial^2 x_{1l}}{\partial t^2} = a^2 \sum_{v_j \in \mathcal{N}(v_1)} (x_{1l} - x_{jl}), \\
        &\frac{\partial^2 x_{2l}}{\partial t^2} = a^2 \sum_{v_j \in \mathcal{N}(v_2)} (x_{2l} - x_{jl}), \\
        & \dots \\
        &\frac{\partial^2 x_{nl}}{\partial t^2} = a^2 \sum_{v_j \in \mathcal{N}(v_n)} (x_{nl} - x_{jl}).
    \end{aligned} \right.
\end{equation}
This system of equations represents the evolution pattern of a graph signal on any dimension \(l\) over time.
Next, we extend the dimension to \(d\) dimensions:
\begin{equation}\label{eq:pde_left}
    \left\{
    \begin{aligned}
        &\frac{\partial^2 \mathbf{x}_1}{\partial t^2} = [\frac{\partial^2 x_{11}}{\partial t^2}, \frac{\partial^2 x_{12}}{\partial t^2}, \dots, \frac{\partial^2 x_{1d}}{\partial t^2}], \\
        &\frac{\partial^2 \mathbf{x}_2}{\partial t^2} = [\frac{\partial^2 x_{21}}{\partial t^2}, \frac{\partial^2 x_{22}}{\partial t^2}, \dots, \frac{\partial^2 x_{2d}}{\partial t^2}], \\
        & \dots \\
        &\frac{\partial^2 \mathbf{x}_n}{\partial t^2} = [\frac{\partial^2x_{n1}}{\partial t^2}, \frac{\partial^2 x_{n2}}{\partial t^2}, \dots, \frac{\partial^2 x_{nd}}{\partial t^2}].
    \end{aligned} \right.
\end{equation}
Then, the right-hand side of Equation~(\ref{eq:node_2sys}) can be extended as
\begin{equation}\label{eq:pde_right}
    a^2
    \begin{bmatrix}
        \sum_{v_j \in \mathcal{N}(v_1)} (x_{11} - x_{j1}) & \sum_{v_j \in \mathcal{N}(v_1)} (x_{12} - x_{j2}) & \cdots & \sum_{v_j \in \mathcal{N}(v_1)} (x_{1d} - x_{jd}) \\
        \sum_{v_j \in \mathcal{N}(v_2)} (x_{21} - x_{j1}) & \sum_{v_j \in \mathcal{N}(v_2)} (x_{22} - x_{j2}) & \cdots & \sum_{v_j \in \mathcal{N}(v_2)} (x_{2d} - x_{jd}) \\
        \vdots & \vdots & \ddots & \vdots \\
        \sum_{v_j \in \mathcal{N}(v_n)} (x_{n1} - x_{j1}) & \sum_{v_j \in \mathcal{N}(v_n)} (x_{n2} - x_{j2}) & \cdots & \sum_{v_j \in \mathcal{N}(v_n)} (x_{nd} - x_{jd}) \\
    \end{bmatrix}.
\end{equation}
By combining Equation~(\ref{eq:pde_left}) and Equation~(\ref{eq:pde_right}), we obtain:
\begin{equation}\label{eq:pde2}
    \frac{\partial^2 \mathbf{X}}{\partial t^2} = a^2 [\sum_{v_j \in \mathcal{N}(v_1)} (\mathbf{x}_1 - \mathbf{x}_j), \sum_{v_j \in \mathcal{N}(v_2)} (\mathbf{x}_2 - \mathbf{x}_j), \dots, \sum_{v_j \in \mathcal{N}(v_n)} (\mathbf{x}_n - \mathbf{x}_j)] = a^2 \widehat{\mathbf{L}} \mathbf{X},
\end{equation}
where \(\mathbf{x}_i = [x_{i1}, x_{i2}, \dots, x_{il}]^\top\).

\subsection{Proof of Theorem~\ref{thm:solution_space}}\label{sec:solution_space}

Given Equation~(\ref{eq:hpde_graph}), we consider a graph signal on any dimension \(l\) (\(i.e.,\) Equation~(\ref{eq:node_2sys})).

Let \(\frac{\partial x_{1l}}{\partial t} = y_{1l}, \frac{\partial x_{2l}}{\partial t} = y_{2l}, \dots, \frac{\partial x_{nl}}{\partial t} = y_{nl}\), then \(\frac{\partial y_{1l}}{\partial t} = a^2 \sum_{v_j \in \mathcal{N}(v_1)} (x_{1l} - x_{jl}), \frac{\partial y_{2l}}{\partial t} = a^2 \sum_{v_j \in \mathcal{N}(v_2)} (x_{2l} - x_{jl}), \dots, \frac{\partial y_{nl}}{\partial t} = a^2 \sum_{v_j \in \mathcal{N}(v_n)} (x_{nl} - x_{jl})\).

Subsequently, let \(\mathbf{w}_l = [\mathbf{y}_{:l}, \mathbf{x}_{:l}] \in \mathbb{R}^{2n}\), Equation~(\ref{eq:node_2sys}) can be rewritten in the following form:
\begin{equation}\label{eq:node_1sys}
    \frac{d \mathbf{w}_l}{d t} = \mathbf{C} \mathbf{w}_l = 
    \begin{bmatrix}
        \mathbf{I}_n & \mathbf{0} \\
        \mathbf{0} & a^2 \widehat{\mathbf{L}} \\
    \end{bmatrix}
    \mathbf{w}_l,
\end{equation}
where \(\widehat{\mathbf{L}} = \mathbf{D} - \mathbf{A} \in \mathbb{R}^{n \times n}\) is a the Laplacian matrix.

Note that since spatial derivatives w.r.t space have been transformed into gradients and divergences of graph, \(\mathbf{w}_l\) is now only differentiated w.r.t time. 
Therefore, Equation~(\ref{eq:node_1sys}) essentially is a system of first-order constant-coefficient homogeneous linear differential equations.

To obtain the fundamental matrix of solution of Equation~(\ref{eq:node_1sys}), it is necessary to introduce the following definition and theorem from the mathematical domain.

Given a system of first-order homogeneous linear differential equations
\begin{equation}\label{eq:hl_des}
    \frac{d \mathbf{x}}{d t} = \mathbf{C}(t) \mathbf{x},
\end{equation}
we have:
\begin{definition}\label{def:fm_solution}
    If each column of a matrix \(\mathbf{\Phi} (t) \in \mathbb{R}^{n \times n}\) is a solution to Equation~(\ref{eq:hl_des}), then the matrix \(\mathbf{\Phi} (t)\) is referred to as the \textbf{solution matrix} of Equation~(\ref{eq:hl_des}).
    If the columns of the solution matrix are linearly independent over the interval \(a \leq t \leq b\), it is referred to as the \textbf{fundamental matrix of solution} of Equation~(\ref{eq:hl_des}) on the interval \(a \leq t \leq b\).
\end{definition}
Here, we represent \(n\) linearly independent column vectors from the matrix \(\mathbf{\Phi} (t)\) as \(\boldsymbol{\varphi}_1 (t), \boldsymbol{\varphi}_2 (t), \dots, \boldsymbol{\varphi}_n (t)\).
According to Definition~\ref{def:fm_solution}, we can deduce the following theorems:
\begin{theorem}
    The fundamental matrix of solution defines the \textbf{solution space} of Equation~(\ref{eq:hl_des}), enabling us to express any solution of Equation~(\ref{eq:hl_des}) as a linear combination of \(n\) linearly independent column vectors \(\boldsymbol{\varphi}_1 (t), \boldsymbol{\varphi}_2 (t), \dots, \boldsymbol{\varphi}_n (t)\) from the fundamental matrix of solution \(\mathbf{\Phi}(t)\).
\end{theorem}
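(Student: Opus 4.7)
The plan is to establish this classical structure theorem via the existence-and-uniqueness theorem for linear ODE systems, combined with a Wronskian-style nonsingularity argument and a dimension count. First I would invoke the Picard--Lindel\"of theorem: since $\mathbf{C}(t)$ is (implicitly) continuous on $[a,b]$ and the right-hand side is Lipschitz in $\mathbf{x}$, for any initial data $(t_0, \mathbf{x}_0) \in [a,b] \times \mathbb{R}^n$ the system Equation~(\ref{eq:hl_des}) admits a unique solution on $[a,b]$. This existence-uniqueness will be the backbone of the whole argument.

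Next, I would verify that the set $V$ of all solutions of Equation~(\ref{eq:hl_des}) on $[a,b]$ is a vector subspace of $C^1([a,b], \mathbb{R}^n)$: linearity of differentiation combined with linearity of the right-hand side $\mathbf{C}(t)\mathbf{x}$ in $\mathbf{x}$ makes closure under addition and scalar multiplication immediate. By Definition~\ref{def:fm_solution}, the columns $\boldsymbol{\varphi}_1(t), \dots, \boldsymbol{\varphi}_n(t)$ of $\mathbf{\Phi}(t)$ lie in $V$ and are, by hypothesis, linearly independent as elements of $V$.

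The decisive step is the spanning property. Fix any solution $\mathbf{x}(t) \in V$ and any $t_0 \in [a,b]$. I first claim that $\mathbf{\Phi}(t_0)$ is nonsingular: if not, there would exist constants $c_1, \dots, c_n$, not all zero, with $\sum_{i=1}^n c_i \boldsymbol{\varphi}_i(t_0) = \mathbf{0}$; the function $\mathbf{y}(t) := \sum_{i=1}^n c_i \boldsymbol{\varphi}_i(t)$ would then be a solution of Equation~(\ref{eq:hl_des}) with $\mathbf{y}(t_0) = \mathbf{0}$. Since the zero function is also such a solution, uniqueness forces $\mathbf{y} \equiv \mathbf{0}$ on $[a,b]$, contradicting the linear independence of the $\boldsymbol{\varphi}_i$ as functions. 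Given nonsingularity, there exist unique coefficients $c_1, \dots, c_n$ with $\mathbf{x}(t_0) = \sum_{i=1}^n c_i \boldsymbol{\varphi}_i(t_0)$; the function $\sum_{i=1}^n c_i \boldsymbol{\varphi}_i(t)$ then agrees with $\mathbf{x}(t)$ at $t_0$ and satisfies the same ODE, so a second application of uniqueness yields $\mathbf{x}(t) = \sum_{i=1}^n c_i \boldsymbol{\varphi}_i(t)$ on all of $[a,b]$.

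The main obstacle I expect is precisely the subtle gap between \emph{linear independence as functions on $[a,b]$} and \emph{pointwise linear independence at a specific $t_0$}. For arbitrary vector-valued functions these are inequivalent, which is why the Wronskian-style contradiction above is essential; it critically uses uniqueness of solutions to Equation~(\ref{eq:hl_des}) to promote a single vanishing point to vanishing everywhere. All other ingredients --- the vector-space structure of $V$, and the translation from pointwise expansion at $t_0$ to functional identity on $[a,b]$ --- reduce to direct applications of existence-uniqueness and linearity.
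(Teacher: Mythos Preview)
Your argument is correct and is precisely the standard textbook proof of this classical result: the solution set is a vector space, the Wronskian-style contradiction (via uniqueness) upgrades functional linear independence of the columns to pointwise nonsingularity of $\mathbf{\Phi}(t_0)$, and a second application of uniqueness then gives the spanning property. There is nothing to fault.

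The comparison you asked for is moot, however: the paper does \emph{not} prove this theorem. It is stated in Appendix~A.2 immediately after Definition~\ref{def:fm_solution} as a known fact from ODE theory (``According to Definition~\ref{def:fm_solution}, we can deduce the following theorems'') and is then used without further justification; the only proof the appendix supplies in that section is for the subsequent Theorem~\ref{thm:fm_solution} about $\exp(\mathbf{C}t)$. So your proposal actually fills a gap the paper leaves open, and does so with the canonical argument.
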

Now consider a system of first-order constant-coefficient homogeneous linear differential equations
\begin{equation}\label{eq:chl_des}
    \frac{d \mathbf{x}}{d t} = \mathbf{C} \mathbf{x},
\end{equation}
where \(\mathbf{C} \in \mathbb{R}^{n \times n}\) is a constant matrix.
\begin{theorem}\label{thm:fm_solution}
    The matrix
    \begin{equation}\label{eq:fm_solution}
        \mathbf{\Phi}(t) = \exp \mathbf{C}t
    \end{equation}
    is a fundamental matrix of solution of Equation~(\ref{eq:chl_des}), and \(\mathbf{\Phi}(0) = \mathbf{I}\).
\end{theorem}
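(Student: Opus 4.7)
The plan is to establish the two defining properties required by Definition~\ref{def:fm_solution}: first, that every column of $\mathbf{\Phi}(t) = \exp \mathbf{C}t$ is a solution of Equation~(\ref{eq:chl_des}); second, that these columns are linearly independent on every interval. I would start from the power-series definition $\exp \mathbf{C}t = \sum_{k=0}^{\infty} \frac{(\mathbf{C}t)^k}{k!}$ and note that the submultiplicative bound $\|\mathbf{C}^k\| \leq \|\mathbf{C}\|^k$ yields absolute and uniform convergence on every compact time interval. This convergence is strong enough to justify termwise differentiation, and after relabeling the summation index I obtain $\frac{d}{dt}\exp \mathbf{C}t = \mathbf{C}\exp \mathbf{C}t$. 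Reading this equation column by column shows that each column of $\mathbf{\Phi}(t)$ satisfies Equation~(\ref{eq:chl_des}), so $\mathbf{\Phi}(t)$ is a solution matrix in the sense of Definition~\ref{def:fm_solution}.

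For the normalization claim, I would evaluate the series at $t = 0$; only the $k = 0$ term survives, giving $\mathbf{\Phi}(0) = \mathbf{I}$. This already exhibits $n$ linearly independent columns at a single time. To extend independence to every $t$, I would establish $\exp \mathbf{C}t \cdot \exp(-\mathbf{C}t) = \mathbf{I}$ by multiplying the two absolutely convergent series and collecting terms via the binomial identity, which is permissible because $\mathbf{C}$ commutes with itself. Consequently $\mathbf{\Phi}(t)$ is invertible for every $t$, so its columns are linearly independent on any interval, completing the verification that $\mathbf{\Phi}(t)$ is a fundamental matrix of solution.

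The main obstacle is the careful analytic bookkeeping: justifying termwise differentiation of an infinite matrix series and rearranging the double series in the product $\exp \mathbf{C}t \cdot \exp(-\mathbf{C}t)$. Both steps are standard once one has absolute and uniform convergence on compacts, but they do require invoking the appropriate theorems (Weierstrass $M$-test for uniform convergence, and Mertens' theorem for the Cauchy product) rather than manipulating the series formally. An alternative that partly bypasses the rearrangement issue is to verify invertibility directly: define $\Psi(t) := \exp \mathbf{C}t \cdot \exp(-\mathbf{C}t)$ and apply the product rule together with the already-established identity $\frac{d}{dt}\exp \mathbf{C}t = \mathbf{C}\exp \mathbf{C}t$ to obtain $\Psi'(t) = \mathbf{0}$; combined with $\Psi(0) = \mathbf{I}$ this forces $\Psi(t) \equiv \mathbf{I}$, from which invertibility of $\mathbf{\Phi}(t)$ follows without a double-series manipulation.
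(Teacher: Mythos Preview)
Your proposal is correct and follows essentially the same route as the paper: both differentiate the exponential series termwise to obtain $\frac{d}{dt}\exp\mathbf{C}t = \mathbf{C}\exp\mathbf{C}t$, and both read off $\mathbf{\Phi}(0)=\mathbf{I}$ directly from the series. The one substantive difference is in the linear-independence step. The paper simply observes $\det\mathbf{\Phi}(0)=\det\mathbf{I}=1$ and stops there, implicitly relying on the standard fact that a solution matrix whose Wronskian is nonzero at one point is fundamental on the whole interval. You instead establish invertibility at \emph{every} $t$ via $\exp\mathbf{C}t\cdot\exp(-\mathbf{C}t)=\mathbf{I}$, either by Cauchy-product manipulation or by the slick $\Psi'(t)=\mathbf{0}$ argument. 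Your version is more self-contained and also supplies the convergence and termwise-differentiation justifications that the paper omits entirely, but the underlying strategy is the same.
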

\begin{proof}\label{pf:fm_solution}
    It is easily inferred from the definition that \(\mathbf{\Phi}(0) = \mathbf{I}\). 
    Taking the derivative of Equation~(\ref{eq:fm_solution}) w.r.t \(t\), we obtain:
    \begin{equation}
        \begin{split}
            \frac{d \mathbf{\Phi} (t)}{d t} &= \frac{d}{dt} \exp \mathbf{C} t \\
            &= \mathbf{C} + \frac{\mathbf{C}^2 t}{1!} + \frac{\mathbf{C}^3 t^2}{2!} + \cdots + \frac{\mathbf{A}^k t^{k-1}}{(k-1)!} + \cdots \\
            &= \mathbf{C} \exp \mathbf{C} t \\
            &= \mathbf{C} \mathbf{\Phi} (t)
        \end{split}
    \end{equation}
    This indicates that \(\mathbf{\Phi} (t)\) is the solution matrix of Equation~(\ref{eq:chl_des}). 
    Furthermore, because \(\det \mathbf{\Phi} (0) = \det \mathbf{I} = 1\), \(\mathbf{\Phi} (t)\) is the fundamental matrix of solution of Equation~(\ref{eq:chl_des}).
\end{proof}
Therefore, for Equation~(\ref{eq:node_1sys}), its fundamental matrix of solution is
\begin{equation}\label{eq:node_1sys_solution}
    \begin{split}
        \mathbf{\Phi} (t) &= \exp \mathbf{C} t \\
        &= \mathbf{I} + 
        \begin{bmatrix}
            \mathbf{I} & \mathbf{0} \\
            \mathbf{0} & a^2 \widehat{\mathbf{L}} \\
        \end{bmatrix} \frac{t}{1!} + 
        \begin{bmatrix}
            \mathbf{I} & \mathbf{0} \\
            \mathbf{0} & a^2 \widehat{\mathbf{L}}^2 \\
        \end{bmatrix} \frac{t^2}{2!} + \cdots + 
        \begin{bmatrix}
            \mathbf{I} & \mathbf{0} \\
            \mathbf{0} & a^2 \widehat{\mathbf{L}}^k \\
        \end{bmatrix} \frac{t^k}{k!} + \cdots \\
        &=
        \begin{bmatrix}
            \exp \mathbf{I} t & \mathbf{0} \\
            \mathbf{0} & a^2 \exp \widehat{\mathbf{L}} t \\
        \end{bmatrix}.
    \end{split}
\end{equation}
where \(\boldsymbol{\varphi}_1 (t), \boldsymbol{\varphi}_2 (t), \dots, \boldsymbol{\varphi}_{2n} (t)\) are the column vectors from \(\mathbf{\Phi} (t)\).

Ultimately, the solution space of all dimension can be represented as the Cartesian product of the solution spaces of each dimension:
\begin{equation}\label{eq:cartesian}
    \begin{aligned}
        \mathbf{\Psi} = \underbrace{\mathbf{\Phi} \times \mathbf{\Phi} \times \dots \times \mathbf{\Phi}\Phi }_{d},
    \end{aligned}
\end{equation}
where \(\times\) is the operator of Cartesian product.

\subsection{Proof of Theorem~\ref{thm:solution_eigen}}\label{sec:solution_eigen}

\begin{proof}
    Let \(\boldsymbol{\varphi} (t) = e^{\lambda t} \mathbf{u}, \mathbf{u} \neq \mathbf{0}\), and substituting it into Equation~(\ref{eq:node_1sys}), we obtain:
    \begin{equation}\label{eq:char_eq}
        \lambda e^{\lambda t} \mathbf{u} = \mathbf{C} e^{\lambda t} \mathbf{u}.
    \end{equation}
    Because \( e^{\lambda t} \neq 0\), Equation~(\ref{eq:char_eq}) becomes:
    \begin{equation}\label{eq:char_eq2}
        (\lambda \mathbf{I} - \mathbf{C}) \mathbf{u} = \mathbf{0}.
    \end{equation}
    We can observe that Equation~(\ref{eq:char_eq2}) is the characteristic equation of matrix \(\mathbf{C}\).
    Therefore, \(e^{\lambda t} \mathbf{u}\) constitutes a solution to Equation~(\ref{eq:node_1sys}) if and only if \(\lambda\) is the eigenvalue of \(\mathbf{C}\) and \(\mathbf{u}\) is the corresponding eigenvector.
    
    For the matrix \(\mathbf{C}\) in Equation~(\ref{eq:node_1sys}), its characteristic equation is
    \begin{equation}\label{eq:pde_1sys_det}
        \det (\lambda \mathbf{I} - \mathbf{C}) = |\lambda \mathbf{I} - \mathbf{C}| = 
        \begin{vmatrix}
            \lambda \mathbf{I} - \mathbf{I} & \mathbf{0} \\
            \mathbf{0} & \lambda \mathbf{I} - a^2 \widehat{\mathbf{L}} \\
        \end{vmatrix} = 
        |\lambda \mathbf{I} - \mathbf{I}| |\lambda \mathbf{I} - a^2 \widehat{\mathbf{L}}| = 0.
    \end{equation}
    Therefore, the eigenvalues of \(\mathbf{I}\) and \(a^2 \widehat{\mathbf{L}}\) are both eigenvalues of \(\mathbf{C}\), that is \(\lambda^\prime_1 = \lambda^\prime_2 = \dots = \lambda^\prime_n = 1, \lambda^\prime_{n+1} = a^2 \widehat{\lambda}_1, \lambda^\prime_{n+2} = a^2 \widehat{\lambda}_2, \dots, \lambda^\prime_{2n} = a^2 \widehat{\lambda}_n\).
    
    The corresponding eigenvectors of \(\mathbf{I}\) are \(\mathbf{v}_1 = [1, 0, \dots, 0]^\top, \mathbf{v}_2 = [0, 1, \dots, 0]^\top, \dots, \mathbf{v}_{n} = [0, 0, \dots, 1]^\top\), and the corresponding eigenvectors of \(\widehat{\mathbf{L}}\) are \(\widehat{\mathbf{u}}_1, \widehat{\mathbf{u}}_2, \dots, \widehat{\mathbf{u}}_{n}\), thus the eigenvectors of \(\mathbf{C}\) are
    \begin{equation}\label{eq:pde_1sys_ev}
        \mathbf{u}^\prime_1 = 
        \begin{bmatrix} 
            \mathbf{v}_1 \\ 
            \mathbf{0} \\ 
        \end{bmatrix}, 
        \mathbf{u}^\prime_2 = 
        \begin{bmatrix} 
            \mathbf{v}_2 \\ 
            \mathbf{0} \\ 
        \end{bmatrix}, \dots, 
        \mathbf{u}^\prime_n = 
        \begin{bmatrix} 
            \mathbf{v}_n \\ 
            \mathbf{0} \\ 
        \end{bmatrix}, 
        \mathbf{u}^\prime_{n+1} = 
        \begin{bmatrix} 
            \mathbf{0} \\ 
            \widehat{\mathbf{u}}_1 \\ 
        \end{bmatrix}, 
        \mathbf{u}^\prime_{n+2} = 
        \begin{bmatrix} 
            \mathbf{0} \\ 
            \widehat{\mathbf{u}}_2 \\ 
        \end{bmatrix}, \dots, 
        \mathbf{u}^\prime_{2n} = 
        \begin{bmatrix} 
            \mathbf{0} \\ 
            \widehat{\mathbf{u}}_n \\ 
        \end{bmatrix}.
    \end{equation}

    Because \(\mathbf{u}^\prime_1, \mathbf{u}^\prime_2, \dots, \mathbf{u}^\prime_{2n}\) are linearly independently, matrix
    \begin{equation}\label{eq:node_1sys_solution_eigen}
        \mathbf{\Phi} (t) = [e^{\lambda_1 t} \mathbf{u}^\prime_1, e^{\lambda_2 t} \mathbf{u}^\prime_2, \dots, e^{\lambda_{2n} t} \mathbf{u}^\prime_{2n}],
    \end{equation}
    is a fundamental matrix of solution of Equation~(\ref{eq:node_1sys}).

    Because both \(\exp \mathbf{C} t\) and \(\mathbf{\Phi} (t)\) are solutions of Equation~(\ref{eq:node_1sys}), there exists a non-singular constant matrix \(\mathbf{B}\) such that
    \begin{equation}\label{eq:2_solution}
        \exp \mathbf{C} t = \mathbf{\Phi} (t) \mathbf{B}.
    \end{equation}
    Let \(t = 0\), we have \(\mathbf{B} = \mathbf{\Phi}^{-1} (0)\).
    Therefore, 
    \begin{equation}\label{eq:eq:fm_solution2}
        \exp \mathbf{C} t = \mathbf{\Phi} (t) \mathbf{\Phi}^{-1} (0).
    \end{equation}
\end{proof}

\section{Details of Polynomial-based Spectral Graph Neural Networks}

\paragraph{SGC.} \citet{SGC} directly utilizes monomials as filter
\begin{equation}\label{eq:SGC_filter}
    g(\widetilde{\lambda}) = (1 - \widetilde{\lambda})^K, \ \widetilde{\lambda} \in [0, 2)
\end{equation}
to obtain the spectral graph convolution
\begin{equation}\label{eq:SGC}
    \mathbf{Z} = (\mathbf{I} - \widetilde{\mathbf{L}})^K \mathbf{X} \mathbf{W}.
\end{equation}

\paragraph{APPNP.} \citet{APPNP} leverage the concept of personalized PageRank~\cite{PageRank} to design graph convolutions, where its filter is equivalent to a polynomial 
\begin{equation}\label{eq:APPNP_filter} 
    g(\widetilde{\lambda}) = \sum^K_{k=0} \frac{\alpha^k}{1-\alpha} (1 - \widetilde{\lambda})^k, \ \widetilde{\lambda} \in [0, 2)
\end{equation} 
with monomial basis \((1 - \widetilde{\lambda})^k\).
Then the spectral graph convolution of APPNP is given by:
\begin{equation}\label{eq:APPNP}
    \mathbf{Z} = \alpha (\mathbf{I} - (1 - \alpha) (\mathbf{I} - \widetilde{\mathbf{L}}))^{-1} \phi (\mathbf{X}),
\end{equation}
where \(\alpha\) is a hyperparameter, so the filter cannot be learned during the training process.

\paragraph{GPR-GNN.} \citet{GPR-GNN} employ learnable parameters \(\alpha_k\) as coefficients to learn a polynomial filter
\begin{equation}\label{eq:GPR_filter}
    g(\widetilde{\lambda}) = \sum^K_{k=0} \theta_k (1 - \widetilde{\lambda})^k, \ \widetilde{\lambda} \in [0, 2)
\end{equation}
with monomial basis like APPNP, can be viewed as a Generalized PageRank.
Then the spectral graph convolution of GPR-GNN is given by:
\begin{equation}\label{eq:GPRGNN}
    \mathbf{Z} = \sum^K_{k=0} \theta_k (\mathbf{I} - \widetilde{\mathbf{L}})^k \phi (\mathbf{X})
\end{equation}

\paragraph{ChebNet.} \citet{ChebNet} approximate graph spectral filter using Chebyshev polynomial approximation
\begin{equation}\label{eq:Cheb_filter}
     g(\lambda) = \sum^{K-1}_{k=0} \theta_k T_k (\lambda - 1), \ \lambda \in [0, 2].
\end{equation}
Here, for \(k = 0, \dots, K\), \(T_0 (\lambda) = 1, T_1 (\lambda) = \lambda, \ \) \(T_k (\lambda) = 2 \lambda T_{k-1} (\lambda) - T_{k-2} (\lambda)\) are the Chebyshev base.
Then the spectral graph convolution of ChebNet is given by:
\begin{equation}
    \mathbf{Z} = \sum^{K-1}_{k=0} T_k (\mathbf{L} - \mathbf{I}) \mathbf{X} \mathbf{W}_k.
\end{equation}

\paragraph{BernNet.} \citet{BernNet} utilize Bernstein polynomial approximation to approximate graph spectral filters:
\begin{equation}\label{eq:BernNet_filter}
    g(\lambda) = \sum^K_{k=0} \theta_k b_{k, K}(\lambda / 2), \ \lambda \in [0, 2].
\end{equation}
Here, for \(k = 0, \dots, K\), \(b_{k, K} (\lambda) = \binom{K}{k} (1 - \lambda)^{(K-k)} \lambda^k, \lambda \in [0, 1]\) are the Bernstein base.
The Bernstein polynomial approximation can learn arbitrary spectral filters.
Then the spectral graph convolution of BernNet is given by:
\begin{equation}\label{eq:BernNet}
    \mathbf{Z} = \sum^K_{k=0} \frac{\theta_k}{2^K} \binom{K}{k} (2\mathbf{I} - \mathbf{L})^{K-k} \mathbf{L}^k \phi (\mathbf{X}).
\end{equation}

\paragraph{JacobiConv.} \citet{JacobiConv} approximate spectral graph convolutions using Jacobi polynomial approximation:
\begin{equation}\label{eq:JacobiConv_filter}
    g(\lambda) = \sum^K_{k=0} \theta_k P^{a,b}_k (1 - \lambda), \ \lambda \in [0, 2].
\end{equation}
Here, for \(k = 0, \dots, K\), \(P^{(a, b)}_0 (\lambda) = 1, P^{(a, b)}_1 (\lambda) = \frac{a - b}{2} + \frac{a + b + 2}{2} \lambda, P^{(a, b)}_k (\lambda) = (\alpha_k \lambda + \alpha^\prime_k) P^{(a, b)}_{k-1} (\lambda) - \alpha^{\prime\prime}_k P^{(a, b)}_{k-2} (\lambda)\) are the Jacobi base, where \(a, b\) are hyperparameters and \(\beta_k = \frac{(2k + a + b)(2k + a + b - 1)}{2k(k + a + b)}, \alpha^\prime_k = \frac{(2k + a + b - 1)(a^2 - b^2)}{2k(k + a + b)(2k + a + b - 2)}, \alpha^{\prime\prime}_k = \frac{(k + a - 1)(k + b - 1)(2k + a + b)}{k(k + a + b)(2k + a + b - 2)}\).
The Jacobi polynomial is a general form of Chebyshev polynomials (\(i.e. \ a = b = 0.5\)).
Then the spectral graph convolution of JaocibConv is given by:
\begin{equation}\label{eq:JacobiConv}
    \mathbf{Z}_{:j} = \sum^K_{i=0} \theta_{kj} P^{a,b}_k (\mathbf{I} - \mathbf{L}) \phi (\mathbf{X}_{:j}).
\end{equation}

\paragraph{ChebNetII.} \citet{ChebNetII} discover that the poor performance of Chebyshev polynomials is due to the learned illegal parameters triggering the Runge phenomenon (where the fitted function deviates from the original function as the order increases). 
Subsequently, they introduce Chebyshev interpolation into Chebyshev polynomials to enhance graph spectral filters:
\begin{equation}\label{eq:ChebII_filter}
    g(\lambda) = \sum^K_{k=0} c_k (\theta) T_k (\lambda - 1), \ \lambda \in [0, 2].
\end{equation}
Here, the coefficients \(c_0 (\theta) = \frac{1}{K + 1} \sum^K_{j=0} \alpha_j, c_k (\theta) = \frac{2}{K + 1} \sum^K_{j=0} \theta_j T_k (x_j)\) are obtained from Chebyshev nodes \(x_j = \cos ((j + 1/2)\pi/(K+1))\).
Then the spectral graph convolution of ChebNetII is given by:
\begin{equation}\label{eq:ChebNetII}
    \mathbf{Z} = \frac{2}{K+1} \sum^K_{k=0} \sum^K_{j=0} \theta_j T_k (x_j) T_k (\mathbf{L} - \mathbf{I}) \phi (\mathbf{X}).
\end{equation}

\section{Detailed Experimental Setup}\label{sec:setup}

\paragraph{Parameter search.} We utilize wandb library for parameter search in node classification task.
The search strategy adopt the Bayes strategy, with parameter ranges as shown in Table~\ref{tb:params_node}.

\begin{table*}[h]
    \caption{The range of hyperparameters in node classification task.}
    \label{tb:params_node}
    \centering
    \small
    \begin{tabular}{lcc}
        \toprule
        \textbf{Hyperparameters} & \textbf{Range} & \textbf{Distribution} \\
        \midrule
        Hidden-dimension \(d\) & \(\{32, 64, 128\}\) & values \\
        Polynomial order \(K\) & \(\{1, 2, 3, 4, 5, 6, 7, 8, 9, 10\}\) & values \\
        Time size \(\tau\) & \(\{0.2, 0.5, 1.0, 2.0, 5.0\}\) & values \\
        Terminal time \(T\) & \([1, 20]\) & uniform \\
        Dropout & \([0, 0.8]\) & uniform \\
        Learning rate & \([0.001, 0.25]\) & log\_uniform\_values \\
        Weight decay & \([0, 0.1]\) & uniform \\
        \bottomrule
    \end{tabular}
\end{table*}

Furthermore, we conduct parameter search in signal filtering task according to the schemes outlined in Table~\ref{tb:params_filter}. 
For Hyperbolic-GCN, Hyperbolic-GAT, and Hyperbolic-ARMA, we only utilize scheme 1. For Hyperbolic-Cheb, Hyperbolic-Bern, and Hyperbolic-ChebII, to minimize the number of parameter combinations, we respectively employ scheme 1 and scheme 2 for search and select the optimal parameter combination from them.

\begin{table*}[h]
    \caption{The range of hyperparameters in signal filtering task.}
    \label{tb:params_filter}
    \centering
    \small
    \begin{tabular}{lcc|cc}
        \toprule
        \multirow{2}{*}{\textbf{Hyperparameters}} & \multicolumn{2}{c}{\textbf{Scheme 1}} & \multicolumn{2}{c}{\textbf{Scheme 2}} \\
        & \textbf{Range} & \textbf{Distribution} & \textbf{Range} & \textbf{Distribution} \\
        \midrule
        Polynomial order \(K\) & \(\{1, 2, \dots, 10\}\) & values & \(10\) & value \\
        Time size \(\tau\) & \(\{0.5, 0.75, \dots, 5\}\) & values & \(\{0.5, 1\}\) & values \\
        Terminal time \(T\) & \([1, 10]\) & uniform & \(\{0.5, 1, 2\}\) & values \\
        Learning rate & \([0.001, 0.25]\) & log\_uniform\_values& \([0.001, 0.25]\) & log\_uniform\_values \\
        \bottomrule
    \end{tabular}
\end{table*}

\paragraph{Training process.} In the node classification task, we employ cross-entropy as the loss function and utilize the Adam optimizer~\cite{Adam} for optimization. 
The models are trained for 200 epochs with early stopping set to 10. 
In the signal filtering task, we optimize the squared error between the model output and ground truth using the Adam optimizer. 
The models are trained for 2000 epochs with early stopping set to 100.


\end{document}